%% file: neurips_2026.tex
\documentclass{article}

\usepackage[preprint]{neurips_2026}

\usepackage[utf8]{inputenc}
\usepackage[T1]{fontenc}
\usepackage{hyperref}
\usepackage{url}
\usepackage{booktabs}
\usepackage{amsfonts}
\usepackage{amsmath}
\usepackage{amssymb}
\usepackage{amsthm}
\usepackage{nicefrac}
\usepackage{microtype}
\usepackage[table]{xcolor}
\usepackage{graphicx}
\usepackage{multirow}
\usepackage{enumitem}
\usepackage{subcaption}
\usepackage{float}
\usepackage{wrapfig}
\usepackage[most]{tcolorbox}
\usepackage{fontawesome5}

\newtheorem{theorem}{Theorem}
\newtheorem{proposition}{Proposition}

\newtheorem{corollary}{Corollary}

\title{From Noise to Diversity: Random Embedding Injection in LLM Reasoning}

\author{%
  \textbf{Heejun Kim}\textsuperscript{1,\,\dag} \quad \textbf{Seungpil Lee}\textsuperscript{1,3,\,\dag} \quad \textbf{Jewon Yeom}\textsuperscript{2} \quad \textbf{Jaewon Sok}\textsuperscript{2} \\
  \textbf{Seonghyeon Park}\textsuperscript{2} \quad \textbf{Jeongjae Park}\textsuperscript{2} \quad \textbf{Taesup Kim}\textsuperscript{2,\,*} \quad \textbf{Sundong Kim}\textsuperscript{1,\,*} \\[1ex]
  \textsuperscript{1}Gwangju Institute of Science and Technology \quad \textsuperscript{2}Seoul National University \quad \textsuperscript{3}Microsoft Research \\[0.5ex]
  \texttt{\{ya2298ya, iamseungpil\}@gm.gist.ac.kr} \\
  \texttt{\{jewon0908, tjrwodnjs999, gene2002, jeongjae.park, taesup.kim\}@snu.ac.kr} \\
  \texttt{sundong@gist.ac.kr}%
  \thanks{\textsuperscript{\dag}Equal contribution. \quad \textsuperscript{*}Co-corresponding authors.}
}

\begin{document}

\maketitle

\begin{abstract}
Recent \textit{soft prompt} research has tried to improve reasoning by inserting trained vectors into LLM inputs, yet whether the gain comes from the \emph{learned content} or from the \emph{act of injection itself} has not been carefully separated. We study Random Soft Prompts (RSPs), which drop the training step entirely and append a freshly drawn sequence of random embedding vectors to the input. Each RSP vector is sampled from an isotropic Gaussian fitted to the entrywise mean and variance of the pretrained embedding table; the sequence carries no learned content, and yet reaches accuracy comparable to optimized \textit{soft prompts} on math reasoning benchmarks in several settings. The mechanism unfolds in two stages: because attention has to absorb a never-seen-before random position, the distribution over the first few generated tokens flattens and reasoning trajectories \emph{branch}, and as generation continues this influence dilutes naturally so the response commits to a single completion. We show that during inference RSPs lift early-stage token diversity and, combined with temperature sampling, widen Pass@$N$, the probability that at least one out of $N$ attempts is correct. Beyond inference, we carry the same effect into DAPO training and demonstrate practical gains. Our contributions are: (i) RSP isolates the simplest form of \textit{soft prompt} --- training-free, freshly resampled --- providing a unified lens for the structural effect of injection that variants otherwise differing in training and form all share; (ii) a theoretical and empirical validation of the underlying mechanism; and (iii) an extension from inference to training. Code: \faGithub\ \href{https://github.com/heejunkim00/RSP}{\texttt{github.com/heejunkim00/RSP}}.
\end{abstract}

\input{sections/1_introduction}
\input{sections/2_related_work}
\input{sections/4_why_rsp_works}
\input{sections/3_experimental_analysis}
\input{sections/5_application}
\input{sections/6_conclusion}

\bibliographystyle{plainnat}
\bibliography{biblist}

\appendix
\input{appendix/appendix}

\end{document}

%% file: sections/1_introduction.tex
\section{Introduction}
\label{sec:introduction}

As LLMs scale to billions of parameters, full fine-tuning is prohibitive, motivating parameter-efficient methods \citep{hu2022lora, houlsby2019adapter}. Among these, \textit{soft prompts} insert learnable continuous embeddings into the input and steer behavior through attention \citep{li2021prefix, lester2021power}; the paradigm is actively adopted for mathematical reasoning \citep{kang2025model, xu2025softcot, ye2025thinking, hao2024training, zhang2025memgen}. These methods diverge in injection position, training objective, and prompt form, and prior work attributes their gains to the \emph{learned content} of the optimized vectors. The act of injecting embeddings into the input, common to all variants, has remained outside the analysis. Even though theory shows trained prefixes only bias attention in a fixed direction \citep{petrov2024prefix}, the source of the effect itself remains unsettled.

\begin{figure}[!t]
  \centering
  \includegraphics[width=\textwidth]{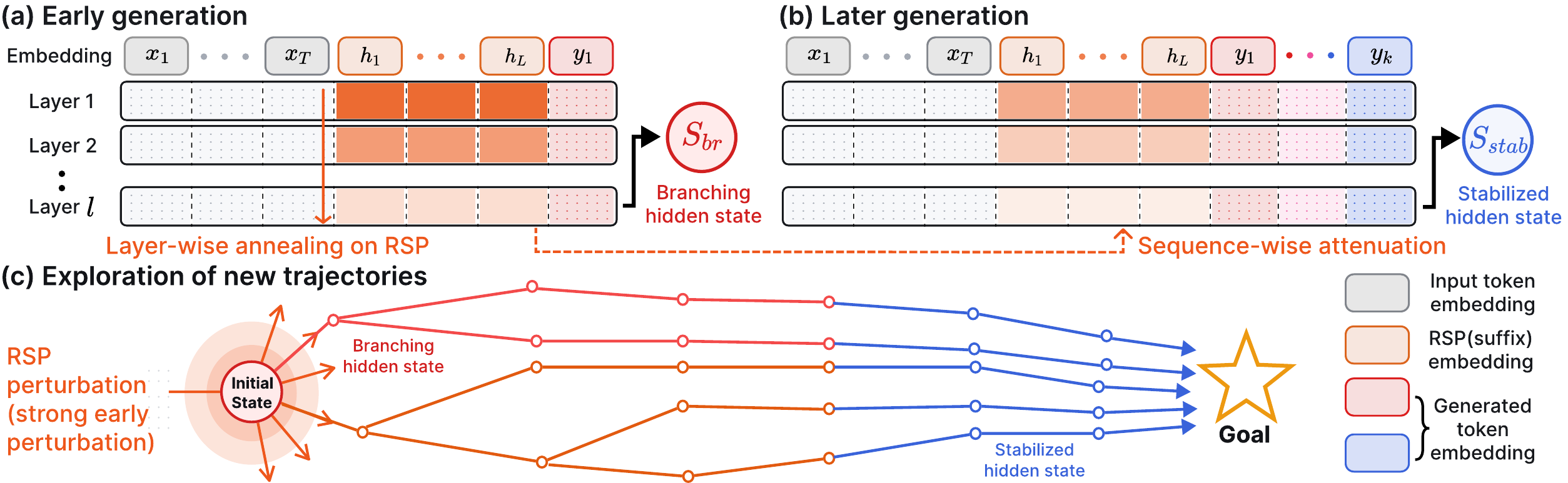}
  \caption{Conceptual overview of RSP-induced trajectory diversity. The hidden states shown are the final-layer outputs that drive the next-token distribution. (a) Early decoding: RSP perturbation reaches a \emph{branching state} (red; formalized in \S\ref{sec:rsp_definition}), whose top-$K$ differs from the no-RSP baseline and induces a diverse output distribution. (b) Later decoding: as the KV cache grows, the bound on RSP attention mass narrows (Theorem~\ref{thm:decay}); branching events become rare and the hidden state (blue) commits to the branch already opened. (c) Across rollouts, the trajectories opened by early RSP perturbation accumulate into independent paths.}
  \label{fig:main}
\end{figure}

To separate \emph{learned content} from the \emph{act of injection}, we use Random Soft Prompts (RSPs) as a control --- continuous vectors drawn from an isotropic Gaussian fitted to the embedding table's mean and variance, with no learning. The control turns out to be non-neutral. Applying a chat template to a base model not fine-tuned for it degrades reasoning, yet appending RSPs recovers up to $+29$ pp on Qwen2.5-Math-1.5B; if simple noise were merely shaking the model, accuracy should have worsened instead. The injection itself, independent of any learned content, alters model behavior.

We analyze training-free RSPs on LLM reasoning. With each rollout receiving an \emph{independent} draw, a single RSP injection \emph{branches} hidden states into diverse trajectories early on, which then \emph{stabilize} as decoding proceeds. Empirically, RSPs share the early-decoding entropy signature of optimized methods (LTPO, TTSV, SoftCoT) despite carrying no learned content; Pass@$N$ accumulates only under \emph{independent} resampling; and the same input-side diversity composes with DAPO \citep{yu2025dapo} training. Our contributions are: (i) RSP isolates the simplest form of \textit{soft prompt} --- training-free and freshly resampled per rollout --- providing a unified lens for the structural effect of injection across variants that otherwise differ in training objective and prompt form, while still reaching accuracy comparable to optimized methods in several settings; (ii) a theoretical and empirical account of the mechanism: attention-mediated branch opening, isotropic coverage across all directions, and the automatic attenuation of RSP attention as the KV cache grows; and (iii) DAPO extension showing independent RSP resampling composes with rollout-diversity training.

%% file: sections/2_related_work.tex

\section{Background}\label{sec:background}

\subsection{Soft Prompt Methods for LLM Reasoning}\label{sec:rw_soft_prompt}

Prior \textit{soft prompt} methods share a common structure: they inject continuous vectors that do not correspond to discrete vocabulary tokens into the model's representation stream and \emph{optimize} them under task-specific objectives. As parameter-efficient alternatives to full fine-tuning, Prefix-Tuning \citep{li2021prefix}, Prompt Tuning \citep{lester2021power}, and P-tuning \citep{liu2021gpt} reached near-fine-tuning performance using learnable continuous vectors, after which LLM-reasoning variants followed. TTSV \citep{kang2025model} optimizes \emph{prefix} embeddings at test time via trajectory entropy minimization; SoftCoT \citep{xu2025softcot} replaces explicit chain-of-thought tokens with soft tokens generated by an assistant model; LTPO \citep{ye2025thinking} optimizes the \textit{soft prompt} per problem to maximize its own confidence on that problem; COCONUT \citep{hao2024training} feeds its hidden states back as input embeddings; MemGen \citep{zhang2025memgen} uses embeddings as experience memory; and Pause tokens \citep{goyal2024think} insert learnable tokens to provide additional computation steps.

These methods differ in position, training stage, and training target, yet share a common assumption: that the \emph{learned representations} are the core driver of the effect. This paradigm faces several limitations. \textbf{(i)} Training is domain/benchmark-specific, requiring re-optimization for new settings and sensitive to seed/hyperparameter choices. \textbf{(ii)} Several have been observed to degrade on out-of-distribution challenging tasks~\citep{ye2025thinking}. \textbf{(iii)} Evidence for the effect is largely empirical, with limited theoretical explanation beyond the training objective. Furthermore, soft prompting and prefix-tuning are theoretically restricted to biasing attention outputs in a fixed direction, without changing the relative attention pattern among content tokens~\citep{petrov2024prefix}. Together, these findings cast doubt on whether the learned prompt vectors function as intended: they are suitable for \emph{eliciting} or \emph{combining} skills in the pretrained model, but they struggle to learn new capabilities.

Moreover, prior works have not separated the contribution of \emph{injection itself} from that of \emph{optimization}. In light of limitations (i)--(iii), this paper focuses on \emph{injection itself}: we introduce \emph{Random Soft Prompts} (RSPs) --- vectors drawn from an isotropic Gaussian fitted to the pretrained embedding statistics, with the training stage entirely removed --- and show that they produce output-distribution shifts and accuracy comparable to those of learned soft prompts, thereby disentangling the two contributions. Section~\ref{sec:why_rsp_works} analyzes the mechanism theoretically; Section~\ref{sec:experimental_analysis} validates it empirically.

\subsection{Noise Injection in Neural Networks}\label{sec:rw_perturbation}

Neural networks have incorporated noise at several stages and locations. During training, dropout \citep{srivastava2014dropout} randomly deactivates \textit{hidden activations}, and NEFTune \citep{jain2024neftune} \textit{adds} Gaussian noise to \textit{input embeddings} during instruction fine-tuning to improve instruction following. At inference time, randomized smoothing \citep{cohen2019certified} injects Gaussian noise into \textit{raw inputs} to obtain certified robustness, and SmoothLLM \citep{robey2023smoothllm} perturbs prompts at the \textit{discrete character} level to defend against jailbreaking. In RL, NoisyNet \citep{fortunato2018noisy} \textit{adds} learnable, trained noise to \textit{network weights} to aid exploration. Rather than \textit{adding} noise to embeddings, RSP \textit{appends} it, and differs in two respects: \textbf{(i)} unlike NoisyNet, it is entirely training-free, with no model fine-tuning and no learned noise parameters; \textbf{(ii)} it appends within a single forward pass, eliminating the multi-pass aggregation that robustness methods require.

%% file: sections/4_why_rsp_works.tex

\section{Random Soft Prompts and Why They Work}\label{sec:why_rsp_works}

\subsection{Random Soft Prompt}\label{sec:rsp_definition}

Random embedding vectors appended to LLM inputs --- carrying no learned content --- lift reasoning accuracy on math benchmarks and reach the range of optimized \textit{soft prompt} methods (Table~\ref{tab:prior_comparison}, \S\ref{sec:performance}). This section lays the theoretical groundwork for why a training-free distribution suffices, by tracking the \emph{attention mass} $w$ placed on RSP tokens. Early in decoding $w$ is large enough to open a different decoding branch (\S\ref{sec:improve}); as the KV cache grows, the upper bound on $w$ narrows automatically and the model commits to the branch already opened (\S\ref{sec:converge}) --- an implicit explore-then-exploit. Independent resampling per rollout opens a fresh branch each time (proofs in Appendix~\ref{app:proofs}).

\begin{table}[!t]
\centering
\caption{Comparison with prior \textit{soft prompt} methods (TTSV, SoftCoT, LTPO) under unified evaluation. Baseline and RSP values are from Table~\ref{tab:performance}. \textbf{Bold} denotes the best result and \underline{underline} denotes the second best for each model--benchmark pair.}
\label{tab:prior_comparison}
\footnotesize
\renewcommand{\arraystretch}{0.9}
\begin{tabular}{llccccc}
\toprule
\textbf{Model} & \textbf{Benchmark} & \textbf{Baseline} & \cellcolor{blue!10}\textbf{RSP} & \textbf{TTSV} & \textbf{SoftCoT} & \textbf{LTPO} \\
\midrule
\multirow{3}{*}{Qwen2.5-Math-7B-Instruct} & MATH-500 & 83.20 & \cellcolor{blue!10}\textbf{84.20} & \underline{83.80} & 82.80 & 83.00 \\
 & GSM8K   & \underline{95.45} & \cellcolor{blue!10}\textbf{95.68} & 95.30 & 95.00 & 94.84 \\
 & AIME24  & 13.33 & \cellcolor{blue!10}\underline{16.67} & \textbf{23.30} & \underline{16.67} & 13.33 \\
\midrule
\multirow{3}{*}{Qwen2.5-Math-1.5B-Instruct} & MATH-500 & 73.00 & \cellcolor{blue!10}\underline{75.20} & \underline{75.20} & \textbf{76.00} & 72.40 \\
 & GSM8K   & 85.29 & \cellcolor{blue!10}\textbf{85.75} & \underline{85.70} & 84.46 & 84.53 \\
 & AIME24  & \underline{10.00} & \cellcolor{blue!10}\textbf{20.00} &  6.70 &  6.67 & \underline{10.00} \\
\bottomrule
\end{tabular}
\end{table}

To isolate the act of injection from learned content while keeping the magnitude comparable to real tokens and not privileging any direction in embedding space, we draw RSP from an isotropic Gaussian fitted to the entrywise statistics of the pretrained embedding table. Let $\mathbf{W}_E \in \mathbb{R}^{V\times d}$ denote the pretrained token-embedding matrix, with $V$ the vocabulary size and $d$ the hidden dimension. Write its entrywise statistics as $\mu_E := \tfrac{1}{V\cdot d}\sum_{v,k}[\mathbf{W}_E]_{v,k}$ (the mean over all $V\cdot d$ entries) and $\sigma_E := \mathrm{std}(\mathbf{W}_E)$. An RSP of length $L$ is a sequence of continuous vectors $\mathbf{H} = [h_1,\dots,h_L]\in\mathbb{R}^{L\times d}$ drawn independently from
\begin{equation}\label{eq:rsp_def}
h_j \;\sim\; \mathcal{N}(\mu_E\mathbf{1}_d,\;\sigma_E^2\mathbf{I}_d), \quad j=1,\dots,L.
\end{equation}
The centered form $\bar h_j := h_j - \mu_E\mathbf{1}_d \sim \mathcal{N}(0,\sigma_E^2\mathbf{I}_d)$ is a zero-mean isotropic Gaussian. Letting $\mathbf{X} \in \mathbb{R}^{T\times d}$ denote the input token-embedding sequence (the $T$ rows of $\mathbf{W}_E$ corresponding to the prompt tokens), the main text uses the \emph{suffix} form $[\mathbf{X};\mathbf{H}] \in \mathbb{R}^{(T+L)\times d}$ as the default; other positions (\emph{prefix} $[\mathbf{H};\mathbf{X}]$, \emph{infix} ($\mathbf{H}$ inserted within $\mathbf{X}$) \citep{kang2025model, xu2025softcot, ye2025thinking, zhang2025memgen}) are reported as ablations in \S\ref{sec:performance} and Appendix~\ref{app:ablation}. RSP is not a learnable parameter, so it receives no gradient and is freshly drawn for each rollout. We define decoding step $t$ to be a \emph{branching event} and the last-layer hidden state $\mathbf{s}^{(t)}$ to be a \emph{branching state} when the LM-head top-$K$ differs from that of the no-RSP baseline $\bar{\mathbf{s}}^{(t)}$ (Figure~\ref{fig:main}, red).\footnote{Italic $h_j$: RSP input space; bold $\mathbf{s}^{(t)}$: hidden state. $K$ denotes the cardinality in top-$K$ analyses; nucleus sampling uses a separate threshold $p$. $t$ for decoding step, $\ell$ for self-attention layer.}

\subsection{Exploration: RSP strengthens early-stage exploration}\label{sec:improve}\label{sec:multipath}

Inserting a random prompt concentrates attention on it, and that concentration amplifies exploration. The effect of a single injection on the hidden state raises two questions --- \emph{how much} and \emph{where} --- decided respectively by the scalar $w$ inside one attention head and by the distribution of $\mathbf{H}$. The isotropic Gaussian addresses both at once. Both questions trace back to the branching condition of \S\ref{sec:rsp_definition}: for a single RSP draw to shift top-$K$, its perturbation of the head output must be large enough to propagate through the layers (\emph{how much}) and along directions that affect logit ordering (\emph{where}).

Consider one attention head in self-attention layer $\ell$. At query position $i$, the query attends $n\ge1$ unmasked real tokens ($n$ depends on $i$ via the causal mask) together with $L\ge1$ RSP tokens, all attention logits finite. Write the attention weights as $\alpha_{ij}^{(\ell)}$ (softmax-normalized so that $\sum_{j=1}^n \alpha_{ij}^{(\ell)} + \sum_{j=1}^L \alpha_{i,n+j}^{(\ell)} = 1$) and the value vectors on the real and RSP sides as $\mathbf{v}_j^{(\ell)},\mathbf{v}_{r_j}^{(\ell)}$. Defining the total attention mass on random tokens $w_{r,i}^{(\ell)} := \sum_{j=1}^L \alpha_{i,n+j}^{(\ell)}$, the real-token mass $1-w_{r,i}^{(\ell)}$ is positive and the head output $\mathbf{o}_i^{(\ell)}$ splits \emph{exactly} into a renormalized real-token term $\tilde{\mathbf{o}}_i^{(\ell)}$ and an RSP-induced contribution $\boldsymbol{\eta}_i^{(\ell)}$:
\begin{equation}\label{eq:decomp}
\mathbf{o}_i^{(\ell)}
\;=\;
(1-w_{r,i}^{(\ell)})\,\tilde{\mathbf{o}}_i^{(\ell)}
\;+\;
\boldsymbol{\eta}_i^{(\ell)},
\quad
\tilde{\mathbf{o}}_i^{(\ell)} := \sum_{j=1}^n \tfrac{\alpha_{ij}^{(\ell)}}{1-w_{r,i}^{(\ell)}}\,\mathbf{v}_j^{(\ell)},
\quad
\boldsymbol{\eta}_i^{(\ell)} := \sum_{j=1}^L \alpha_{i,n+j}^{(\ell)}\mathbf{v}_{r_j}^{(\ell)}.
\end{equation}

\begin{proof}[Derivation of Eq.~\eqref{eq:decomp}]
Write the head output as $\sum_{j=1}^{n}\alpha_{ij}^{(\ell)}\mathbf{v}_{j}^{(\ell)}+\sum_{j=1}^{L}\alpha_{i,n+j}^{(\ell)}\mathbf{v}_{r_j}^{(\ell)}$, then factor $1-w_{r,i}^{(\ell)} = \sum_{j=1}^n \alpha_{ij}^{(\ell)}>0$ out of the real-token sum, where $\sum_{j=1}^n \tfrac{\alpha_{ij}^{(\ell)}}{1-w_{r,i}^{(\ell)}} = 1$, so $\tilde{\mathbf{o}}_i^{(\ell)}$ is a valid weighted average over real-token value vectors. No approximation beyond the softmax-normalization identity is used, so the decomposition is exact for any attention head with finite logits.
\end{proof}

This single quantity $w_{r,i}^{(\ell)}$ controls both the attenuation ratio $1-w_{r,i}^{(\ell)}$ on the real signal and the upper bound on the random contribution $\|\boldsymbol{\eta}_i^{(\ell)}\| \le w_{r,i}^{(\ell)}\,\max_j \|\mathbf{v}_{r_j}^{(\ell)}\|$. The \emph{magnitude} of RSP-induced variation thus reduces to one number in $[0,1]$. Early in decoding the KV cache is short and this value is non-negligible, so a single RSP draw can perturb next-token decisions.

The scalar $w$ controls \emph{how much}; \emph{where} is decided by the distribution of $\mathbf{H}$. For the local logit argument, isolate one RSP position and let $\bar h \in \mathbb{R}^d$ denote the i.i.d.\ marginal of any centered vector $\bar h_j$ in Eq.~\eqref{eq:rsp_def}, with the other centered RSP positions fixed at zero. Let $z_a(\bar h)$ be the under-RSP output logit at vocab token $a$ and write the vocab-logit gap $\Delta_{ab}(\bar h) := z_a(\bar h) - z_b(\bar h)$ ($\bar h = 0$ is the centered mean, not the no-RSP state; Appendix~\ref{app:why_random}). The transformer's logit map is non-linear in $\bar h$, but a first-order Taylor expansion around $\bar h = 0$ gives the local surrogate $\Delta_{ab}(\bar h) \approx \Delta_{ab}(0) + b_{ab}^\top \bar h$, with $b_{ab} := \nabla_{\bar h}(z_a - z_b)|_{\bar h=0} \in \mathbb{R}^d$ the gradient direction along which vocab tokens $a, b$ swap rank (not unit-norm in general). Because RSP is training-free, we cannot know in advance which $b_{ab}$ opens a useful branch, so the distribution must avoid systematically under-perturbing any direction. A deterministic injection pins to one direction; vocabulary sampling inherits the embedding table's anisotropy. The remaining design --- maximize the worst-case directional variance at fixed budget --- is uniquely solved by isotropy.

\begin{proposition}[Maximin directional coverage]\label{thm:minimax_directional_coverage}
Let $\mathcal{D}_\rho$ denote the family of zero-mean distributions on $\mathbb{R}^d$ whose covariance satisfies $\mathrm{tr}(\Sigma_D)\le\rho^2$. For each $D \in \mathcal{D}_\rho$, $\min_{\|u\|=1}\mathrm{Var}_{h\sim D}(u^\top h) = \lambda_{\min}(\Sigma_D)$, and $\sup_{D \in \mathcal{D}_\rho} \lambda_{\min}(\Sigma_D) = \rho^2/d$, attained iff $\Sigma_D = (\rho^2/d)\mathbf{I}_d$.
\end{proposition}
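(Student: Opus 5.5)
The argument has three parts: identify directional variance with a Rayleigh quotient, bound the minimum eigenvalue by the trace, and characterize equality.

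\emph{Step 1 (directional variance).} Fix $D\in\mathcal{D}_\rho$ and a unit vector $u$. Since $D$ is zero-mean, $\mathrm{Var}_{h\sim D}(u^\top h)=\mathbb{E}[(u^\top h)^2]=u^\top\Sigma_D u$. Because $\Sigma_D$ is symmetric positive semidefinite, it has a spectral decomposition $\Sigma_D=Q\Lambda Q^\top$. By the Rayleigh--Ritz characterization, $\min_{\|u\|=1}u^\top\Sigma_D u=\lambda_{\min}(\Sigma_D)$, and the minimum is attained at a corresponding unit eigenvector. This proves the first claim. The only thing to check is that the trace constraint makes all second moments finite, so the variance is well defined.

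\emph{Step 2 (upper bound).} The eigenvalues $\lambda_1,\dots,\lambda_d$ of $\Sigma_D$ are nonnegative and satisfy $\sum_k\lambda_k=\mathrm{tr}(\Sigma_D)\le\rho^2$. Hence $d\,\lambda_{\min}\le\sum_k\lambda_k\le\rho^2$, so $\lambda_{\min}(\Sigma_D)\le\rho^2/d$ for every $D$. The bound is attained, for example by $\mathcal{N}(0,(\rho^2/d)\mathbf{I}_d)$, which is exactly the centered RSP law in Eq.~\eqref{eq:rsp_def} with $\rho^2=d\sigma_E^2$. So the supremum is a maximum and equals $\rho^2/d$.

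\emph{Step 3 (characterization of equality).} Suppose $\lambda_{\min}(\Sigma_D)=\rho^2/d$. Then the chain $d\,\lambda_{\min}\le\sum_k\lambda_k\le\rho^2$ holds with equality throughout. Every eigenvalue is at least $\lambda_{\min}=\rho^2/d$, and their sum is at most $\rho^2$, so each $\lambda_k-\rho^2/d$ is nonnegative and these differences sum to at most zero. They are therefore all zero, which gives $\Lambda=(\rho^2/d)\mathbf{I}_d$ and hence $\Sigma_D=Q\Lambda Q^\top=(\rho^2/d)\mathbf{I}_d$. Conversely, if $\Sigma_D=(\rho^2/d)\mathbf{I}_d$, its trace is $\rho^2$, so $D$ is feasible, and $\lambda_{\min}=\rho^2/d$.

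The \textbf{main obstacle} is making the ``iff'' precise rather than any calculation. The statement characterizes the \emph{covariance}, not the distribution itself: any zero-mean law with covariance $(\rho^2/d)\mathbf{I}_d$ is optimal, not only the Gaussian. I would state this explicitly. I would also note that at the optimum the trace budget is saturated, since $\mathrm{tr}(\Sigma_D)=\rho^2$.
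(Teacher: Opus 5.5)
Your proof is correct and follows the paper's argument: you identify the worst-case directional variance with the minimum Rayleigh quotient $\lambda_{\min}(\Sigma_D)$ and bound it by $\mathrm{tr}(\Sigma_D)/d\le\rho^2/d$. Your equality step is slightly more careful than the paper's, because you require both inequalities in the chain to be tight (all eigenvalues equal \emph{and} the trace budget saturated), and you explicitly exhibit a feasible law that attains the bound.
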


This is a \emph{design criterion} for the prompt law, not a guarantee of correctness; correctness enters through the task-side $p_{\min}(x)$ assumption of \S\ref{sec:converge} and independent resampling.\footnote{The maximin is stated in input space; isotropy is not claimed to survive transformer layers. Any logit direction pulls back via the model Jacobian to an input-space $b$, and isotropy guarantees only that no such pulled-back direction (in the role of $u$ above) has zero projected variance.}

Gaussian RSP attains the equality with $\Sigma = \sigma_E^2 \mathbf{I}_d$ ($\rho^2 = d\sigma_E^2$) and adds \emph{full support} on $\mathbb{R}^d$ (Appendix~\ref{app:why_random}). Isotropy excludes no direction; full support gives positive probability to the open branching-event region (\S\ref{sec:rsp_definition}) whenever it is non-empty (Appendix~\ref{app:topk_entry}). Monotone temperature preserves ranking and cannot reach this region, so RSP's \emph{rank-changing} branching follows from the two properties together, splitting the hidden state into multiple branching states early in reasoning.

\subsection{Annealing: KV cache growth dilutes RSP attention}\label{sec:converge}

\begin{figure}[!t]
  \centering
  \includegraphics[width=\textwidth]{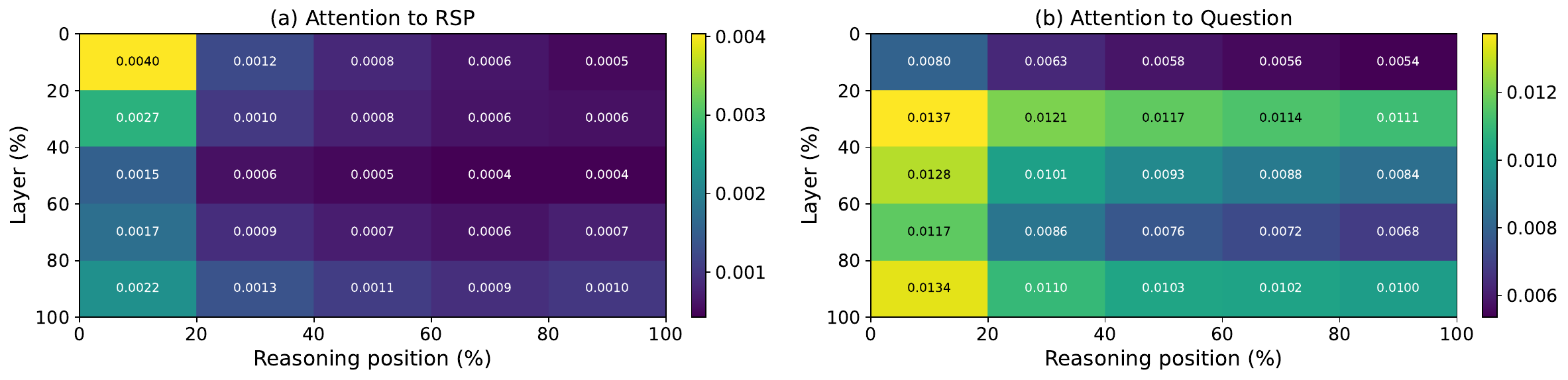}
  \caption{Per-token attention mass on Qwen2.5-Math-7B (\emph{suffix}, 500 MATH-500 problems, independent RSP per problem). (a) RSP-token attention decreases along the reasoning-position axis, matching Theorem~\ref{thm:decay}'s KV cache-growth bound; the additional layer-axis decrease is an empirical phenomenon (deeper layers sharpen the real-vs-random gap) outside Theorem~\ref{thm:decay}'s scope. (b) Question-token attention stays comparatively uniform. Reported quantity: per-token mass of Appendix~\ref{app:attention_mass}, equal to $w_{r,i}^{(\ell)}/L$ averaged over heads and samples.}
  \label{fig:attention_mass}
\end{figure}

The early branching stabilizes as decoding proceeds: the upper bound on $w_{r,i}^{(\ell)}$ narrows along the sequence axis with autoregressive cache growth. Each step adds one real-token term while RSP terms stay fixed; at frozen attention-logit gap, this asymmetry yields the following bound.

\begin{theorem}[Attention-mass decay under KV cache growth]\label{thm:decay}
Consider a single attention head with per-head key dimension $d_{\text{head}}$, where query $\mathbf{q}_i$ at position $i$ attends $n\ge1$ real keys $\mathbf{k}_j$ and $L\ge1$ RSP keys $\mathbf{k}_{r_{j'}}$ under finite logits. Define the pre-scale \emph{attention-logit} gap $\Delta_i := \min_{j\in[n]}\mathbf{q}_i^\top\mathbf{k}_j - \max_{j'\in[L]}\mathbf{q}_i^\top\mathbf{k}_{r_{j'}}$ (distinct from the vocab-logit gap $\Delta_{ab}$ of \S\ref{sec:improve}). In scaled dot-product attention,
\[
w_{r,i} \;\le\; \frac{L}{L + n\,\exp(\Delta_i/\sqrt{d_{\text{head}}})},
\]
and the right-hand side is strictly decreasing in $n$ and tends to $0$ at fixed $L$ and $\Delta_i$.
\end{theorem}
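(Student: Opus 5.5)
The plan is to write the softmax weights out explicitly and bound the ratio by pushing every real-key logit down to its minimum and every RSP logit up to its maximum. Set $\tau := \sqrt{d_{\text{head}}}$, $a_j := \mathbf{q}_i^\top\mathbf{k}_j/\tau$ for real keys and $b_{j'} := \mathbf{q}_i^\top\mathbf{k}_{r_{j'}}/\tau$ for RSP keys. All logits are finite, so every exponential is a positive real number and
\[
w_{r,i} \;=\; \frac{B}{A+B},\qquad A := \sum_{j=1}^n e^{a_j},\quad B := \sum_{j'=1}^L e^{b_{j'}} .
\]
The map $B \mapsto B/(A+B)$ is increasing in $B$ for fixed $A>0$. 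The map $A \mapsto B/(A+B)$ is decreasing in $A$ for fixed $B>0$. So it suffices to upper-bound $B$ and lower-bound $A$.

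Let $m := \min_j \mathbf{q}_i^\top\mathbf{k}_j$ and $M := \max_{j'}\mathbf{q}_i^\top\mathbf{k}_{r_{j'}}$, so that $\Delta_i = m - M$. Every real term then satisfies $e^{a_j} \ge e^{m/\tau}$, giving $A \ge n e^{m/\tau}$. Every RSP term satisfies $e^{b_{j'}} \le e^{M/\tau}$, giving $B \le L e^{M/\tau}$. Combining the two monotonicity facts,
\[
w_{r,i} \;\le\; \frac{L e^{M/\tau}}{n e^{m/\tau} + L e^{M/\tau}} \;=\; \frac{L}{L + n\exp\!\big((m-M)/\tau\big)} \;=\; \frac{L}{L+n\exp(\Delta_i/\sqrt{d_{\text{head}}})}.
\]
The middle equality comes from dividing the numerator and denominator by $e^{M/\tau}$.

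The main point needing care is that the two substitutions are legitimate at the same time. I would make this explicit by chaining the steps. First replace $B$ by its upper bound with $A$ held fixed, which can only increase the ratio. Then replace $A$ by its lower bound, which again can only increase it. Each step needs positivity of $A$ and $B$, and this is guaranteed by $n, L \ge 1$ and finite logits.

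For the monotonicity claim, fix $L$ and $\Delta_i$, and write $c := \exp(\Delta_i/\sqrt{d_{\text{head}}}) > 0$. The function $n \mapsto L/(L+nc)$ has a denominator that is strictly increasing in $n$, because $c>0$. The function is therefore strictly decreasing, and it tends to $0$ as $n\to\infty$ since the denominator diverges.

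I would close with a remark on scope. The statement treats $\Delta_i$ as fixed while $n$ grows, which is the frozen-gap idealization mentioned before the theorem. In actual decoding $\Delta_i$ may change as new keys are added, so the decay is a property of the bound at a fixed gap, not an unconditional claim about $w_{r,i}$ along a trajectory.
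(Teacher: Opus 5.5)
Your proof is correct and is essentially the paper's argument. The paper merges your two bounds into the single inequality $\sum_{j} e^{s_j} \ge (n/L)\exp(\Delta_i/\sqrt{d_{\text{head}}})\,R$, using that the largest RSP term dominates the average $R/L$, and substitutes it so that $R$ cancels; this needs monotonicity only in the real-key mass, whereas you bound the two masses separately and chain two monotonicity steps, an equivalent bookkeeping choice, and your monotonicity-in-$n$ step and frozen-gap remark also match the paper.
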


\begin{proof}
Let $s_j := \mathbf{q}_i^\top\mathbf{k}_j/\sqrt{d_{\text{head}}}$, $s_{r_{j'}} := \mathbf{q}_i^\top\mathbf{k}_{r_{j'}}/\sqrt{d_{\text{head}}}$, $s_{r,\max} := \max_{j'} s_{r_{j'}}$, and $R := \sum_{j'=1}^L \exp(s_{r_{j'}})$. The gap definition gives $s_j \ge \Delta_i/\sqrt{d_{\text{head}}} + s_{r,\max}$ for every real $j$, and $\exp(s_{r,\max}) \ge R/L$ holds because the max dominates the average; combining the two and summing over the $n$ real keys yields $\sum_{j=1}^n \exp(s_j) \ge (n/L)\exp(\Delta_i/\sqrt{d_{\text{head}}})\,R$. Substituting into $w_{r,i} = R/(\sum_j\exp(s_j) + R)$ gives
\[
w_{r,i}
\;\le\;
\frac{R}{(n/L)\exp(\Delta_i/\sqrt{d_{\text{head}}})\,R + R}
\;=\;
\frac{L}{L + n\,\exp(\Delta_i/\sqrt{d_{\text{head}}})}.
\]
The derivative of the right-hand side in $n$ is $-L\exp(\Delta_i/\sqrt{d_{\text{head}}})/(L + n\exp(\Delta_i/\sqrt{d_{\text{head}}}))^2 < 0$, so the bound is strictly decreasing and tends to $0$ as $n\to\infty$.
\end{proof}

Applied at each layer $\ell$ to $w_{r,i}^{(\ell)}$ with gap $\Delta_i^{(\ell)}$, $L$ caps the maximum influence while $n$ grows each step. The theorem guarantees only sequence-axis attenuation at fixed (layer, query, gap); the gap may sharpen at deeper layers, as suggested empirically by Figure~\ref{fig:attention_mass}, but that effect is outside the theorem's scope. The accurate reading is ``the attainable upper bound at a fixed gap decreases monotonically.'' Because Eq.~\eqref{eq:decomp} ties the random contribution to the same scalar, branching-event reachability inherits this envelope --- as the cache grows, events become rare and the response commits to the branch already opened, an implicit explore-then-exploit. Figure~\ref{fig:attention_mass} visualizes this pattern along both axes.

Lifting single-shot branching to task accuracy across $N$ rollouts requires (M1) a positive lower bound $p_{\min}(x)>0$ on the per-rollout probability that one execution reaches a correct trajectory and (M2) the $N$ executions are mutually independent. Under both, $\Pr(\mathrm{Pass@}N \text{ on } x) \ge 1 - (1-p_{\min}(x))^N$ (Appendix~\ref{app:passN_ensemble}). \S\ref{sec:improve}'s isotropy and full support make local branch opening reachable when the open region is nonempty, but they do not by themselves guarantee the task-side correctness condition (M1); \emph{independent resampling} supplies (M2). Sharing one RSP breaks (M2) and removes the accumulation --- a signature \S\ref{sec:experimental_analysis} verifies empirically.

%% file: sections/3_experimental_analysis.tex

\section{Empirical Validation}\label{sec:experimental_analysis}

The mechanism of \S\ref{sec:why_rsp_works} runs from early \emph{branching} through KV cache-growth narrowing to Pass@$N$ accumulation under independent resampling. We verify this flow across three scenes. First, \S\ref{sec:performance} tests whether RSPs preserve baseline accuracy and even recover it in some settings. Next, \S\ref{sec:entropy} examines whether the predicted early diversification and later stabilization appear in entropy and attention signals. \S\ref{sec:diversity} asks whether the Pass@$N$ gain depends on \emph{independence} of the RSP draw. Finally, \S\ref{sec:application} extends the same perturbation from inference to RL training, where rollout diversity is critical.

\subsection{Experimental Setup}\label{sec:setup}

\begin{table}[!t]
\centering
\caption{Accuracy (\%) across model configurations and benchmarks. Values in parentheses indicate the difference from the baseline. Full per-position breakdown is in Appendix~\ref{app:ablation}.}
\label{tab:performance}
\footnotesize
\setlength{\tabcolsep}{3pt}
\renewcommand{\arraystretch}{0.9}
\resizebox{\textwidth}{!}{%
\begin{tabular}{l ccc ccc ccc}
\toprule
 & \multicolumn{3}{c}{\textbf{MATH-500}} & \multicolumn{3}{c}{\textbf{GSM8K}} & \multicolumn{3}{c}{\textbf{AIME24}} \\
\cmidrule(lr){2-4} \cmidrule(lr){5-7} \cmidrule(lr){8-10}
\textbf{Model} & Baseline & \cellcolor{blue!10}RSP(\textit{prefix}) & \cellcolor{blue!10}RSP(\textit{suffix}) & Baseline & \cellcolor{blue!10}RSP(\textit{prefix}) & \cellcolor{blue!10}RSP(\textit{suffix}) & Baseline & \cellcolor{blue!10}RSP(\textit{prefix}) & \cellcolor{blue!10}RSP(\textit{suffix}) \\
\midrule
\multicolumn{10}{l}{\textit{Instruct Models}} \\
\midrule
LLaMA-3.1-8B-Inst      & 52.20 & \cellcolor{blue!10}45.00 {\scriptsize($-$7.2)}  & \cellcolor{blue!10}49.40 {\scriptsize($-$2.8)}  & 85.75 & \cellcolor{blue!10}76.35 {\scriptsize($-$9.4)}  & \cellcolor{blue!10}86.73 {\scriptsize(+1.0)}  & 6.67  & \cellcolor{blue!10}3.33  {\scriptsize($-$3.3)}  & \cellcolor{blue!10}10.00 {\scriptsize(+3.3)}  \\
Qwen2.5-Math-7B-Inst   & 83.20 & \cellcolor{blue!10}81.40 {\scriptsize($-$1.8)}  & \cellcolor{blue!10}83.40 {\scriptsize(+0.2)}    & 95.45 & \cellcolor{blue!10}94.92 {\scriptsize($-$0.5)}  & \cellcolor{blue!10}95.68 {\scriptsize(+0.2)}  & 13.33 & \cellcolor{blue!10}13.33 {\scriptsize(+0.0)}    & \cellcolor{blue!10}16.67 {\scriptsize(+3.3)}  \\
Qwen2.5-Math-1.5B-Inst & 73.00 & \cellcolor{blue!10}74.80 {\scriptsize(+1.8)}    & \cellcolor{blue!10}74.20 {\scriptsize(+1.2)}    & 85.29 & \cellcolor{blue!10}85.29 {\scriptsize(+0.0)}    & \cellcolor{blue!10}84.69 {\scriptsize($-$0.6)} & 10.00 & \cellcolor{blue!10}13.33 {\scriptsize(+3.3)}    & \cellcolor{blue!10}20.00 {\scriptsize(+10.0)} \\
\midrule
\multicolumn{10}{l}{\textit{Base Models + ChatML (Format Mismatch)}} \\
\midrule
Qwen2.5-Math-7B        & 52.20 & \cellcolor{blue!10}59.20 {\scriptsize(+7.0)}    & \cellcolor{blue!10}70.40 {\scriptsize(+18.2)}   & 58.30 & \cellcolor{blue!10}56.41 {\scriptsize($-$1.9)}  & \cellcolor{blue!10}75.97 {\scriptsize(+17.7)} & 23.33 & \cellcolor{blue!10}3.33  {\scriptsize($-$20.0)} & \cellcolor{blue!10}23.33 {\scriptsize(+0.0)}  \\
Qwen2.5-Math-1.5B      & 34.40 & \cellcolor{blue!10}63.40 {\scriptsize(+29.0)}   & \cellcolor{blue!10}51.00 {\scriptsize(+16.6)}   & 37.45 & \cellcolor{blue!10}67.02 {\scriptsize(+29.6)}   & \cellcolor{blue!10}50.64 {\scriptsize(+13.2)} & 13.33 & \cellcolor{blue!10}20.00 {\scriptsize(+6.7)}    & \cellcolor{blue!10}13.33 {\scriptsize(+0.0)}  \\
\bottomrule
\end{tabular}}
\end{table}

We evaluate on three mathematical reasoning benchmarks, MATH-500 \citep{lightman2024lets}, GSM8K \citep{cobbe2021training}, and AIME24, using LLaMA-3.1-8B-Instruct \citep{grattafiori2024llama} and the Qwen2.5-Math model family \citep{yang2024qwen25math} across instruct models and base models with mismatched chat templates. Tables~\ref{tab:performance} and~\ref{tab:prior_comparison} use greedy decoding to isolate the effect of RSPs from sampling stochasticity; the diversity and Pass@$N$ experiments in \S\ref{sec:diversity} use the sampling settings specified there. RSPs follow the definition of \S\ref{sec:rsp_definition}, using the default \emph{suffix} position with a freshly drawn $\mathbf{H}$ per rollout. Since Theorem~\ref{thm:decay} identifies the RSP length $L$ as the design parameter capping the maximum perturbation influence, we ablate $L \in \{10,15,20\}$ per model to identify an appropriate perturbation strength. Tables~\ref{tab:performance} and~\ref{tab:prior_comparison} report results at the per-model selected length. The mechanism analyses in \S\ref{sec:entropy}--\S\ref{sec:diversity} use a fixed setting ($L{=}20$, \emph{suffix}), which insulates them from this selection effect. Full experimental details and per-position results are in Appendix~\ref{app:setup}.

\subsection{How Does Untrained RSP Compare to Baselines \& Optimized Soft Prompts?}\label{sec:performance}

Compared with three optimized methods (TTSV, SoftCoT, LTPO) under a unified pipeline (Appendix~\ref{app:extraction}), RSP lands in the same accuracy band without any training, optimization, or task-specific tuning (Table~\ref{tab:prior_comparison}, presented in \S\ref{sec:why_rsp_works}). It wins outright on several model--benchmark cells and shows meaningful gains even on the challenging AIME24 setting. What we want to highlight is the converse: a fixed, training-free distribution reaching this range is the paper's central evidence that part of the \textit{soft-prompt} effect comes less from learned content than from the random directional perturbation that injection induces --- maximin coverage (\S\ref{sec:improve}) at a magnitude bounded by the KV cache (\S\ref{sec:converge}).

Beyond the direct comparison with optimized methods, we further evaluate RSP across two injection positions (\emph{prefix}, \emph{suffix}) and several model configurations, unpacking the per-configuration patterns of Table~\ref{tab:performance}. On instruct models RSP stays within a few percentage points of the baseline. The most striking pattern is format-mismatch recovery: when a ChatML template is applied to a base model not trained for that format, \emph{Suffix} recovers $+18.2/+17.7$\,pp on Qwen2.5-Math-7B (MATH-500/GSM8K) and \emph{prefix} recovers up to $+29.0/+29.6$\,pp on Qwen2.5-Math-1.5B --- if simple noise were responsible, accuracy should have dropped further.

RSP's \emph{direct accuracy gains} concentrate in misaligned-input recovery and stay within $\pm$a few percentage points on well-aligned instruct models. The effects this paper centers on are the token-, trajectory-, and outcome-level diversity in \S\ref{sec:diversity}, with accuracy recovery as a secondary outcome.

\subsection{How Does RSP Affect the Output Distribution?}\label{sec:entropy}

We next examine how RSP injection reshapes the output distribution. The setting is Qwen2.5-Math-1.5B-Instruct on MATH-500, and we measure per-token entropy, top-1 probability, and varentropy across generation steps. Figure~\ref{fig:entropy_first5} compares these metrics for the first 5\% of generation steps across RSP variants and prior \textit{soft prompt} methods (LTPO, SoftCoT, TTSV); full curves are in Appendix~\ref{app:entropy_full}.

\begin{figure}[ht]
  \centering
  \includegraphics[width=\textwidth]{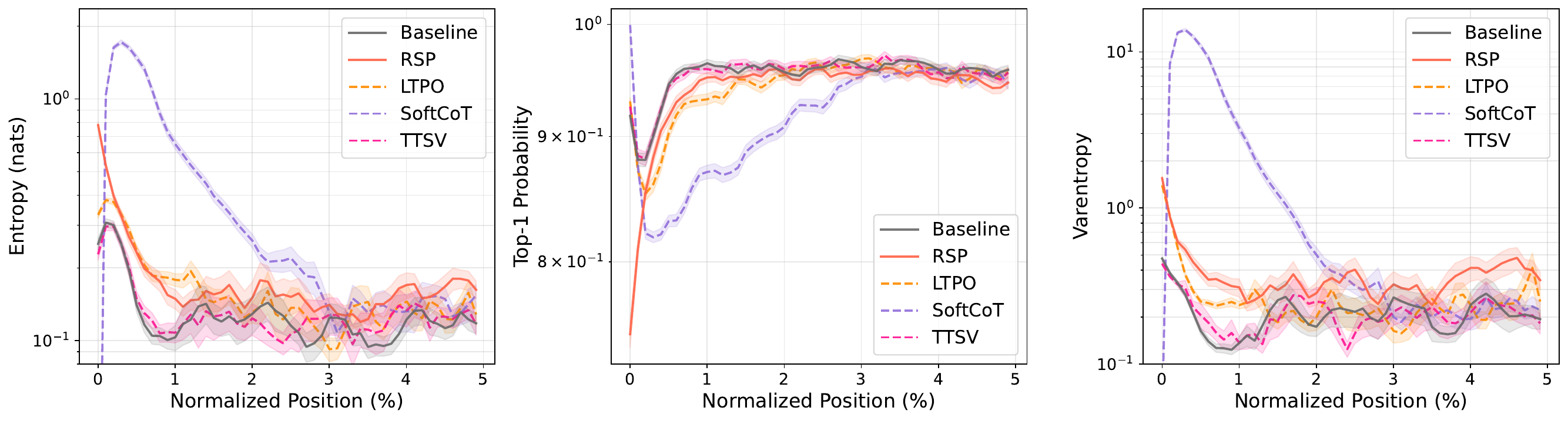}
  \caption{Mean entropy, top-1 probability, and varentropy during the first 5\% of generation steps (Qwen2.5-Math-1.5B-Instruct, MATH-500). Shading indicates $\pm$1 SEM. Solid lines: RSP variants and the baseline. Dashed lines: prior \textit{soft prompt} methods (LTPO, SoftCoT, TTSV).}
  \label{fig:entropy_first5}
\end{figure}

Latent prompt methods including RSP share the same signature: \emph{a rise in early-generation entropy followed by convergence to the baseline later in generation}. A natural mechanistic explanation is the out-of-vocabulary (OOV) effect --- when continuous embeddings outside the vocabulary enter the input, the model has to attend to a never-before-seen key position on top of its familiar token distribution, redistributing probability mass across multiple candidates within the early next-token distribution. All three RSP injection positions show early-stage entropy${\uparrow}$, top-1 probability${\downarrow}$, and varentropy${\uparrow}$, a joint signature consistent with mass redistribution across multiple high-probability candidates, a precondition for the trajectory branching of \S\ref{sec:improve}. The change is confined to the early stage and all three metrics return to baseline in the middle and final portions, consistent with the \emph{early influence with automatic KV cache-growth decay} pattern predicted in \S\ref{sec:converge}.

Prior methods trained under different objectives (LTPO, SoftCoT, TTSV) share this signature. Even TTSV, whose reward explicitly \emph{lowers} mean reasoning-trajectory entropy, shows early-stage entropy comparable to the baseline. The entropy value itself is not a quality score --- higher entropy is not always better; rather, the signature reveals a \emph{shared mechanism} in which OOV embedding injection produces the same fingerprint independently of the optimization objective, supporting the hypothesis that the key driver of latent reasoning is the \emph{act of injection}, not the learned content.

\subsection{Does RSP Induce Trajectory Diversity?}\label{sec:diversity}

Section~\ref{sec:entropy} empirically established that RSP reshapes the early-stage output distribution. We now examine whether this shift translates into reasoning diversity at three complementary levels of analysis: token rankings, semantic content of trajectories, and task-level outcomes. The three analyses converge on a directional picture: RSP's first-token perturbation propagates into semantically more diverse reasoning trajectories, which in turn yields greater output diversity at the task level. All three analyses share the setting: Qwen2.5-Math-1.5B-Instruct, MATH-500, \emph{suffix} injection, and $L=20$.

\paragraph{Token-level: distribution beyond temperature.}
We quantify how much RSP shifts the first-token distribution --- the critical forking point for reasoning trajectories \citep{wang2025beyond}. Using the baseline at $\tau{=}1.0$ as reference, we compare baselines at $\tau{=}2.0,3.0$ (16 samples per problem) against RSP at $\tau{=}1.0$ (averaged over 16 seeds) on three metrics: Spearman rank correlation $\rho$ to detect rank reorganization, the probability mass placed outside the baseline's top-10 to measure support expansion, and Jensen--Shannon divergence to quantify the overall magnitude of distributional change.

\begin{wraptable}{r}{0.55\textwidth}
  \vspace{-1.2em}
  \centering
  \caption{First-token distribution metrics measured against the baseline at $\tau{=}1.0$, comparing baselines at higher temperatures ($\tau{=}2.0, 3.0$) with RSP at $\tau{=}1.0$. RSP metrics are averaged over 16 random seeds; Acc reports mean $\pm$ std across 16 samples per problem.}
  \label{tab:rsp_vs_temp}
  \footnotesize
  \setlength{\tabcolsep}{3.5pt}
  \renewcommand{\arraystretch}{0.9}
  \begin{tabular}{lccc}
    \toprule
    \textbf{Metric} & \textbf{$\tau{=}2.0$} & \textbf{$\tau{=}3.0$} & \cellcolor{blue!10}\textbf{RSP} \\
    \midrule
    Spearman $\rho$ & 1.000 & 1.000 & \cellcolor{blue!10}\textbf{0.709} \\
    Mass outside top-10 & 5.18\% & 29.11\% & \cellcolor{blue!10}\textbf{21.86\%} \\
    JS divergence & 0.060 & 0.218 & \cellcolor{blue!10}\textbf{0.131} \\
    Acc (\%) & 20.77 $\pm$ 1.47 & 1.42 $\pm$ 0.35 & \cellcolor{blue!10}\textbf{73.30 $\pm$ 1.07} \\
    \bottomrule
  \end{tabular}
\end{wraptable}
Table~\ref{tab:rsp_vs_temp} shows the contrast: RSP's distributional shift falls between $\tau{=}2.0$ and $\tau{=}3.0$ in magnitude, but the mechanism differs --- temperature is a monotone transform that preserves ranking ($\rho=1$ at any $\tau$), while RSP partially preserves but reranks ($\rho=0.709$). The Pass@1 row exposes the cost: matching RSP's magnitude with temperature toward $\tau{=}3.0$ collapses accuracy to $1.42\%$, whereas RSP preserves $73.30\%$. Thus, RSP produces a fundamentally different perturbation than temperature scaling. Moreover, RSP's accuracy variability across 16 seeds at $\tau{=}1.0$ is $\pm 1.07$, comparable to the $\pm 1.47$ that temperature sampling produces across 16 samples at $\tau{=}2.0$ (detailed definitions in Appendix~\ref{app:token_metrics}).

\paragraph{Trajectory-level: semantic diversity.}
\begin{wraptable}{r}{0.34\textwidth}
  \vspace{-1.2em}
  \centering
  \caption{Semantic diversity metrics (64 problems, 300 trajectories per condition).}
  \label{tab:semantic_metrics}
  \footnotesize
  \setlength{\tabcolsep}{4pt}
  \renewcommand{\arraystretch}{0.9}
  \begin{tabular}{lcc}
    \toprule
    \textbf{Metric} & \textbf{Baseline} & \cellcolor{blue!10}\textbf{RSP} \\
    \midrule
    Pairwise cos.\ dist. & 0.0612 & \cellcolor{blue!10}\textbf{0.0879} \\
    Inter-cluster dist. & 0.0183 & \cellcolor{blue!10}\textbf{0.0982} \\
    Intra-cluster dist. & 0.0526 & \cellcolor{blue!10}0.0528 \\
    \bottomrule
  \end{tabular}
\end{wraptable}
Token-level reranking raises a second question: do first-token perturbations diverge into distinct trajectories, or converge to similar ones? We sample 64 problems from MATH-500 and generate 300 independent trajectories per problem under Baseline and RSP at $\tau = 1.0$. Each trajectory is encoded by BGE-M3 \citep{chen2024bge} into a dense semantic vector, and we compute three metrics: \textit{pairwise cosine distance} (overall semantic spread), \textit{inter-cluster distance} between centroids of DBSCAN \citep{ester1996density}, a density-based clustering algorithm (separation between distinct trajectory groups), and \textit{intra-cluster distance} (coherence within groups). Table~\ref{tab:semantic_metrics} reveals three concurrent patterns: pairwise distance rises by $1.4\times$, inter-cluster distance jumps by $5.4\times$, and intra-cluster distance is essentially unchanged --- the trajectory set becomes more diverse and splits into more separated groups while preserving within-group coherence. RSP also yields larger pairwise distance than baseline on 56 of 64 problems.

\paragraph{Outcome-level: Pass@$N$ scaling.}
Does this token- and trajectory-level diversity translate into task-level outcome diversity, measured by Pass@$N$ \citep{chen2021codex} (the probability that at least one of $N$ sampled solutions is correct)? We generate 16 samples per problem on MATH-500 and AIME24 and compare four conditions: \textit{(i)} \textit{Baseline} at $\tau\!=\!1.0$; \textit{(ii)} \textit{RSP (single seed)} at $\tau\!=\!1.0$, with one RSP shared across all samples; \textit{(iii)} \textit{RSP (indep.\ seed)} with greedy decoding, a fresh RSP per sample; and \textit{(iv)} \textit{RSP (indep.\ seed)} at $\tau\!=\!1.0$, a fresh RSP per sample.

\begin{figure}[ht]
  \centering
  \includegraphics[width=\textwidth]{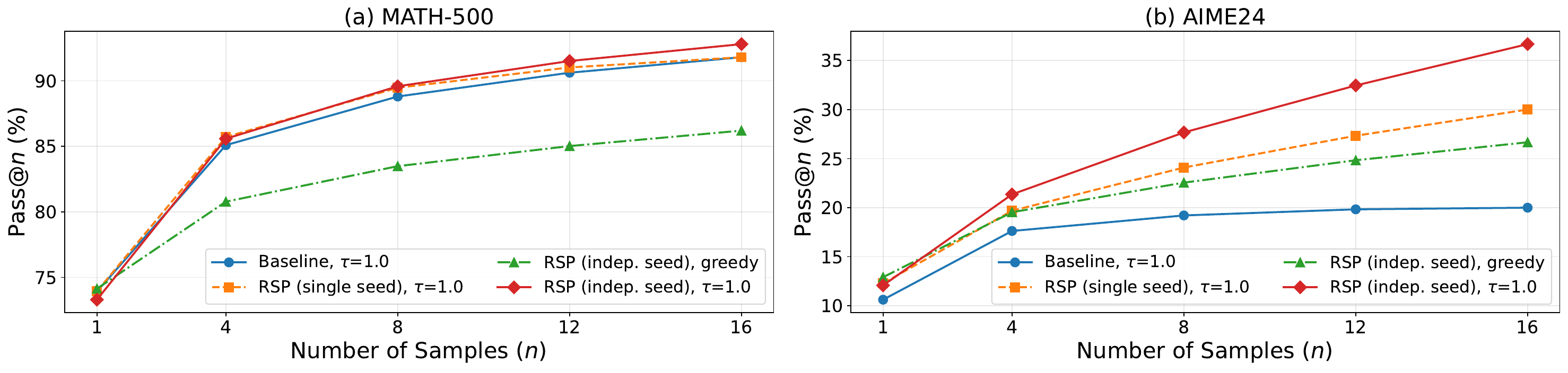}
  \caption{Pass@$N$ scaling on (a) MATH-500 and (b) AIME24 with Qwen2.5-Math-1.5B-Instruct, 16 samples per problem. \textit{Baseline}: temperature sampling only. \textit{RSP (single seed)}: single RSP shared across samples combined with temperature. \textit{RSP (indep.\ seed)}: a different RSP per sample, with or without temperature.}
  \label{fig:pass_at_n}
\end{figure}

Condition \textit{(iv)} achieves the highest Pass@$N$ on both benchmarks, reaching $92.80\%$ on MATH-500 and $36.67\%$ on AIME24 at $N=16$, while \textit{(ii)} shows no improvement over the baseline --- the diversity gain depends on varying the embeddings across samples rather than fixing a single perturbation. Pass@1 stays comparable to the baseline on MATH-500, and on the harder AIME24 \textit{(iv)} even surpasses it, suggesting that independent RSPs combined with temperature sampling expand trajectory diversity without sacrificing single-sample accuracy. Qualitative analysis is in Appendix~\ref{app:qualitative}.

%% file: sections/5_application.tex

\subsection{Application: DAPO Training with RSP}\label{sec:application}

Beyond inference (\S\ref{sec:experimental_analysis}), does the same effect transfer to training? DAPO~\citep{yu2025dapo} is a recent GRPO~\citep{shao2024deepseekmath} variant that preserves rollout diversity at the loss level through asymmetric clipping. We use it as a testbed to verify whether input-side branch opening composes with loss-side preservation to yield further gains. In DAPO+RSP, we incorporate the input-side perturbation of \S\ref{sec:diversity} into the RL rollout stage: each rollout is paired with an independently sampled suffix RSP. The full objective and implementation details are in Appendix~\ref{app:dapo}.

\begin{table}[!ht]
\centering
\caption{Per-benchmark accuracy (\%) at each method's peak step on Qwen2.5-Math-7B. \textbf{Bold} denotes the better of the two RL methods on each benchmark. Avg is the unweighted five-benchmark mean.}
\label{tab:dapo_peak}
\footnotesize
\setlength{\tabcolsep}{6pt}
\renewcommand{\arraystretch}{0.9}
\begin{tabular}{l ccccc c}
\toprule
\textbf{Method} & \textbf{GSM8K} & \textbf{MATH-500} & \textbf{College} & \textbf{Minerva} & \textbf{AIME24} & \textbf{Avg} \\
\midrule
Base                & 57.2 & 52.6 & 18.3 & 13.6 & 8.6  & 30.06 \\
DAPO                & 86.3 & 78.2 & 41.4 & 37.5 & 22.6 & 53.20 \\
\rowcolor{blue!10}
DAPO~+~RSP          & \textbf{87.7} & \textbf{78.6} & \textbf{42.2} & \textbf{39.7} & \textbf{23.6} & \textbf{54.36} \\
\bottomrule
\end{tabular}
\end{table}

We implement DAPO on top of SimpleRL-Zoo~\citep{zeng2025simplerl} and train Qwen2.5-Math-7B on a MATH Level-3--5 subset (\textasciitilde 8.5K prompts): each step uses $G=8$ rollouts per prompt at $\tau{=}1.0$, batch $1{,}024$, $4$ PPO mini-batches of $256$, for $100$ steps. \emph{DAPO} is compared against \emph{DAPO\,+\,RSP}, where \emph{suffix} RSP ($L{=}20$) is applied during rollouts only and evaluation uses no RSP. We evaluate on five math benchmarks (GSM8K \citep{cobbe2021training}, MATH-500 \citep{lightman2024lets}, College Math, Minerva Math \citep{lewkowycz2022minerva}, AIME24) and report the unweighted mean; AIME24 uses Avg@32 at $\tau{=}1.0$, the others greedy.

\begin{wrapfigure}{r}{0.42\textwidth}
  \vspace{-1.0em}
  \centering
  \includegraphics[width=0.40\textwidth]{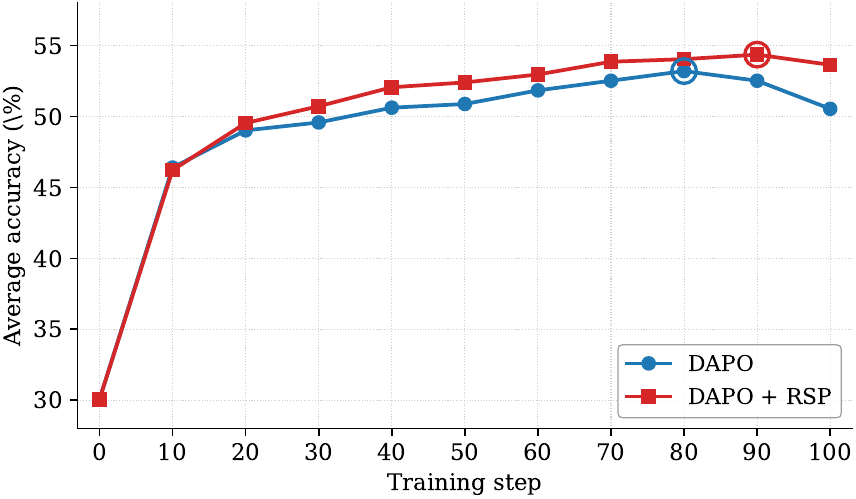}
  \caption{Five-benchmark average accuracy on Qwen2.5-Math-7B. DAPO~+~RSP reaches a higher peak (step~90) and stays stable through step~100.}
  \label{fig:dapo_curve}
  \vspace{-1.5em}
\end{wrapfigure}
Figure~\ref{fig:dapo_curve} and Table~\ref{tab:dapo_peak} show two patterns: a late-stage advantage and a delayed early reward growth. DAPO~+~RSP reaches $54.36\%$ at step~90 (vs DAPO's $53.20\%$ peak, $+1.16$\,pp) and widens the gap to $+3.10$\,pp at step~100 with DAPO's post-peak decline delayed; earlier the curves cross around step~20. The patterns follow from the methods acting at separate stages of the rollout cycle: DAPO preserves diversity at the loss level over already-generated rollouts while RSP perturbs the hidden states that produce them (\S\ref{sec:improve}), exposing the policy to broader learning signals. Empirically, this is the exploration--exploitation dynamics of RL~\citep{sutton2018reinforcement} induced from the input side.

%% file: sections/6_conclusion.tex

\section{Conclusion}\label{sec:conclusion}
\enlargethispage*{2\baselineskip}

We studied Random Soft Prompts (RSPs), training-free isotropic Gaussian vectors whose per-layer contribution is bounded by an attention-mass scalar that shrinks with KV cache growth and whose isotropic resampling reaches top-$K$ entry regions inaccessible to rank-preserving temperature (\S\ref{sec:why_rsp_works}). Empirically, RSP shares the early-decoding entropy signature of optimized \textit{soft prompts}, requires independent resampling for Pass@$N$ gains, and composes with DAPO training (\S\ref{sec:experimental_analysis}--\S\ref{sec:application}), suggesting that part of what trained \textit{soft prompts} deliver is a \emph{structural by-product of injection itself}.

Several directions invite future work. Open questions include whether the mechanism extends beyond RoPE-based decoders and math reasoning, whether RSP admits a continuous control dial like temperature sampling (e.g., via the RSP norm or its mixing ratio with real tokens), and whether the same diversity gains transfer to tasks with multiple valid answers such as code generation or open-ended reasoning. A theoretical account of the layer-axis decay outside Theorem~\ref{thm:decay}'s envelope would broaden the framework. Multi-seed evaluation and held-out validation for $L$ and the injection position would tighten the empirical claims.

%% file: appendix/appendix.tex

\clearpage
\section{Experimental Setup and Full Accuracy Breakdown}\label{app:setup}\label{app:ablation}

All experiments are conducted on a single NVIDIA A6000 40GB GPU with greedy decoding, a maximum generation length of 3{,}072 tokens for MATH-500 and GSM8K, and 4{,}096 tokens for AIME24. Batch size is fixed per model (16 for LLaMA-3.1-8B-Instruct and Qwen2.5-Math-7B variants, 32 for Qwen2.5-Math-1.5B variants). The RSP token count $L$ is selected per (model, benchmark) pair from $\{10, 15, 20\}$ (Table~\ref{tab:token_count}); the per-position breakdown across \emph{prefix} ($[\mathbf{H};\mathbf{X}]$), \emph{infix} ($\mathbf{H}$ inserted within $\mathbf{X}$), and \emph{suffix} ($[\mathbf{X};\mathbf{H}]$) is in Table~\ref{tab:performance_full}. The RSP injection harness used for these experiments is available at \url{https://github.com/heejunkim00/RSP}, with run commands documented in the accompanying README.

\begin{table}[H]
\centering
\caption{Accuracy (\%) across model configurations, injection positions, and benchmarks. Values in parentheses indicate the difference from the baseline. \textbf{Bold} denotes the best result and \underline{underline} denotes the second best for each model--benchmark pair.}
\label{tab:performance_full}
\footnotesize
\renewcommand{\arraystretch}{0.9}
\begin{tabular}{ll ccc}
\toprule
\textbf{Model} & \textbf{Mode} & \textbf{MATH-500} & \textbf{GSM8K} & \textbf{AIME24} \\
\midrule
\multicolumn{5}{l}{\textit{Instruct Models}} \\
\midrule
\multirow{4}{*}{LLaMA-3.1-8B-Instruct} & Baseline & \textbf{52.20} & 85.75 & \underline{6.67} \\
 & \emph{Prefix} & 45.00 {\scriptsize($-$7.2)} & 76.35 {\scriptsize($-$9.4)} & 3.33 {\scriptsize($-$3.3)} \\
 & \emph{Infix} & \underline{49.40} {\scriptsize($-$2.8)} & \underline{85.97} {\scriptsize(+0.2)} & \textbf{10.00} {\scriptsize(+3.3)} \\
 & \emph{Suffix} & \underline{49.40} {\scriptsize($-$2.8)} & \textbf{86.73} {\scriptsize(+1.0)} & \textbf{10.00} {\scriptsize(+3.3)} \\
\midrule
\multirow{4}{*}{Qwen2.5-Math-7B-Instruct} & Baseline & 83.20 & 95.45 & \underline{13.33} \\
 & \emph{Prefix} & 81.40 {\scriptsize($-$1.8)} & 94.92 {\scriptsize($-$0.5)} & \underline{13.33} {\scriptsize(+0.0)} \\
 & \emph{Infix} & \textbf{84.20} {\scriptsize(+1.0)} & \underline{95.60} {\scriptsize(+0.2)} & \textbf{16.67} {\scriptsize(+3.3)} \\
 & \emph{Suffix} & \underline{83.40} {\scriptsize(+0.2)} & \textbf{95.68} {\scriptsize(+0.2)} & \textbf{16.67} {\scriptsize(+3.3)} \\
\midrule
\multirow{4}{*}{Qwen2.5-Math-1.5B-Instruct} & Baseline & 73.00 & \underline{85.29} & 10.00 \\
 & \emph{Prefix} & \underline{74.80} {\scriptsize(+1.8)} & \underline{85.29} {\scriptsize(+0.0)} & \underline{13.33} {\scriptsize(+3.3)} \\
 & \emph{Infix} & \textbf{75.20} {\scriptsize(+2.2)} & \textbf{85.75} {\scriptsize(+0.5)} & 6.67 {\scriptsize($-$3.3)} \\
 & \emph{Suffix} & 74.20 {\scriptsize(+1.2)} & 84.69 {\scriptsize($-$0.6)} & \textbf{20.00} {\scriptsize(+10.0)} \\
\midrule
\multicolumn{5}{l}{\textit{Base Models (Raw Text)}} \\
\midrule
\multirow{4}{*}{Qwen2.5-Math-7B} & Baseline & \textbf{72.20} & \underline{84.15} & 6.67 \\
 & \emph{Prefix} & \underline{71.60} {\scriptsize($-$0.6)} & \textbf{84.31} {\scriptsize(+0.2)} & \textbf{16.67} {\scriptsize(+10.0)} \\
 & \emph{Infix} & 63.20 {\scriptsize($-$9.0)} & 64.29 {\scriptsize($-$19.9)} & \textbf{16.67} {\scriptsize(+10.0)} \\
 & \emph{Suffix} & 55.60 {\scriptsize($-$16.6)} & 54.13 {\scriptsize($-$30.0)} & \underline{13.33} {\scriptsize(+6.7)} \\
\midrule
\multirow{4}{*}{Qwen2.5-Math-1.5B} & Baseline & \underline{61.40} & \textbf{77.86} & \textbf{16.67} \\
 & \emph{Prefix} & \textbf{64.40} {\scriptsize(+3.0)} & \underline{74.30} {\scriptsize($-$3.6)} & \underline{10.00} {\scriptsize($-$6.7)} \\
 & \emph{Infix} & 63.20 {\scriptsize(+1.8)} & 73.92 {\scriptsize($-$3.9)} & \underline{10.00} {\scriptsize($-$6.7)} \\
 & \emph{Suffix} & 54.60 {\scriptsize($-$6.8)} & 58.61 {\scriptsize($-$19.3)} & 3.33 {\scriptsize($-$13.3)} \\
\midrule
\multicolumn{5}{l}{\textit{Base Models + ChatML (Format Mismatch)}} \\
\midrule
\multirow{4}{*}{Qwen2.5-Math-7B} & Baseline & 52.20 & 58.30 & \textbf{23.33} \\
 & \emph{Prefix} & 59.20 {\scriptsize(+7.0)} & 56.41 {\scriptsize($-$1.9)} & \underline{3.33} {\scriptsize($-$20.0)} \\
 & \emph{Infix} & \underline{62.00} {\scriptsize(+9.8)} & \underline{69.07} {\scriptsize(+10.8)} & \textbf{23.33} {\scriptsize(+0.0)} \\
 & \emph{Suffix} & \textbf{70.40} {\scriptsize(+18.2)} & \textbf{75.97} {\scriptsize(+17.7)} & \textbf{23.33} {\scriptsize(+0.0)} \\
\midrule
\multirow{4}{*}{Qwen2.5-Math-1.5B} & Baseline & 34.40 & 37.45 & \underline{13.33} \\
 & \emph{Prefix} & \textbf{63.40} {\scriptsize(+29.0)} & \textbf{67.02} {\scriptsize(+29.6)} & \textbf{20.00} {\scriptsize(+6.7)} \\
 & \emph{Infix} & 39.20 {\scriptsize(+4.8)} & 50.42 {\scriptsize(+13.0)} & 6.67 {\scriptsize($-$6.7)} \\
 & \emph{Suffix} & \underline{51.00} {\scriptsize(+16.6)} & \underline{50.64} {\scriptsize(+13.2)} & \underline{13.33} {\scriptsize(+0.0)} \\
\bottomrule
\end{tabular}
\end{table}

\begin{table}[H]
\centering
\caption{Number of RSP tokens ($L$) used for each model and benchmark.}
\label{tab:token_count}
\footnotesize
\renewcommand{\arraystretch}{0.9}
\begin{tabular}{lccc}
\toprule
\textbf{Model} & \textbf{MATH-500} & \textbf{GSM8K} & \textbf{AIME24} \\
\midrule
LLaMA-3.1-8B-Instruct & 15 & 20 & 15 \\
Qwen2.5-Math-7B-Instruct & 20 & 20 & 20 \\
Qwen2.5-Math-1.5B-Instruct & 20 & 10 & 20 \\
Qwen2.5-Math-7B & 10 & 10 & 20 \\
Qwen2.5-Math-1.5B & 10 & 10 & 20 \\
Qwen2.5-Math-1.5B (ChatML) & 20 & 20 & 10 \\
Qwen2.5-Math-7B (ChatML) & 20 & 20 & 20 \\
\bottomrule
\end{tabular}
\end{table}

\subsection{RSP Injection Positions and Prompt Templates}\label{app:injection_positions}

Below we show the prompt structure for each injection position across all model types. \textcolor{blue!70!black}{Blue text} indicates RSP embeddings, and \textcolor{purple}{purple text} indicates special tokens.

\subsubsection{Prefix}

RSP embeddings are concatenated before the prompt embeddings. No text modification is applied.

\paragraph{ChatML (Qwen Instruct / Base + ChatML).}
\begin{tcolorbox}[colback=gray!5, colframe=black!60, arc=3pt, boxrule=0.5pt, fontupper=\footnotesize\ttfamily, breakable]
\textcolor{blue!70!black}{[Random Embeddings ($L$ tokens)]}\\
\textcolor{purple}{<|im\_start|>}system\\
Please reason step by step, and put your final answer within \textbackslash boxed\{\}.\textcolor{purple}{<|im\_end|>}\\
\textcolor{purple}{<|im\_start|>}user\\
\{question\}\textcolor{purple}{<|im\_end|>}\\
\textcolor{purple}{<|im\_start|>}assistant
\end{tcolorbox}

\paragraph{LLaMA Chat Template.}
\begin{tcolorbox}[colback=gray!5, colframe=black!60, arc=3pt, boxrule=0.5pt, fontupper=\footnotesize\ttfamily, breakable]
\textcolor{blue!70!black}{[Random Embeddings ($L$ tokens)]}\\
\textcolor{purple}{<|begin\_of\_text|>}\\
\textcolor{purple}{<|start\_header\_id|>}system\textcolor{purple}{<|end\_header\_id|>}\\
Cutting Knowledge Date: December 2023\\
Today Date: 26 Jul 2024\\
Please reason step by step, and put your final answer within \textbackslash boxed\{\}.\textcolor{purple}{<|eot\_id|>}\\
\textcolor{purple}{<|start\_header\_id|>}user\textcolor{purple}{<|end\_header\_id|>}\\
\{question\}\textcolor{purple}{<|eot\_id|>}\\
\textcolor{purple}{<|start\_header\_id|>}assistant\textcolor{purple}{<|end\_header\_id|>}
\end{tcolorbox}

\paragraph{Raw Text (Qwen Base).}
\begin{tcolorbox}[colback=gray!5, colframe=black!60, arc=3pt, boxrule=0.5pt, fontupper=\footnotesize\ttfamily, breakable]
\textcolor{blue!70!black}{[Random Embeddings ($L$ tokens)]}\\
\{question\}\\
Please reason step by step, and put your final answer within \textbackslash boxed\{\}.
\end{tcolorbox}

\subsubsection{Infix}

A description text and special tokens are inserted into the prompt. The special token positions are then replaced with RSP embeddings at the embedding level.

\paragraph{ChatML (Qwen Instruct / Base + ChatML).}
\begin{tcolorbox}[colback=gray!5, colframe=black!60, arc=3pt, boxrule=0.5pt, fontupper=\footnotesize\ttfamily, breakable]
\textcolor{purple}{<|im\_start|>}system\\
Please reason step by step, and put your final answer within \textbackslash boxed\{\}.\textcolor{purple}{<|im\_end|>}\\
\textcolor{purple}{<|im\_start|>}user\\
\{question\}\\[0.3em]
\textcolor{gray}{There are \{$L$\} special tokens that contain compressed latent reasoning information that might be useful for your reasoning. If these tokens are useful for your case, you can use them as reference. If these tokens are not useful for your case, you can ignore them and focus back to solving the problem.}\\[0.3em]
Here are the \{$L$\} special tokens: \textcolor{blue!70!black}{<special\_token\_1>...<special\_token\_$L$>}\\
\textcolor{purple}{<|im\_end|>}\\
\textcolor{purple}{<|im\_start|>}assistant
\end{tcolorbox}

\paragraph{LLaMA Chat Template.}
\begin{tcolorbox}[colback=gray!5, colframe=black!60, arc=3pt, boxrule=0.5pt, fontupper=\footnotesize\ttfamily, breakable]
\textcolor{purple}{<|begin\_of\_text|>}\\
\textcolor{purple}{<|start\_header\_id|>}system\textcolor{purple}{<|end\_header\_id|>}\\
Cutting Knowledge Date: December 2023\\
Today Date: 26 Jul 2024\\
Please reason step by step, and put your final answer within \textbackslash boxed\{\}.\textcolor{purple}{<|eot\_id|>}\\
\textcolor{purple}{<|start\_header\_id|>}user\textcolor{purple}{<|end\_header\_id|>}\\
\{question\}\\[0.3em]
\textcolor{gray}{There are \{$L$\} special tokens that contain compressed latent reasoning information that might be useful for your reasoning. If these tokens are useful for your case, you can use them as reference. If these tokens are not useful for your case, you can ignore them and focus back to solving the problem.}\\[0.3em]
Here are the \{$L$\} special tokens:\\
\textcolor{blue!70!black}{<reserved\_special\_token\_0>...}\\
\textcolor{blue!70!black}{<reserved\_special\_token\_$L$>}\\
\textcolor{purple}{<|eot\_id|>}\\
\textcolor{purple}{<|start\_header\_id|>}assistant\textcolor{purple}{<|end\_header\_id|>}
\end{tcolorbox}

\paragraph{Raw Text (Qwen Base).}
\begin{tcolorbox}[colback=gray!5, colframe=black!60, arc=3pt, boxrule=0.5pt, fontupper=\footnotesize\ttfamily, breakable]
\{question\}\\[0.3em]
\textcolor{gray}{There are \{$L$\} special tokens that contain compressed latent reasoning information that might be useful for your reasoning. If these tokens are useful for your case, you can use them as reference. If these tokens are not useful for your case, you can ignore them and focus back to solving the problem.}\\[0.3em]
Here are the \{$L$\} special tokens: \textcolor{blue!70!black}{<special\_token\_1>...<special\_token\_$L$>}\\[0.3em]
Please reason step by step, and put your final answer within \textbackslash boxed\{\}.
\end{tcolorbox}

\subsubsection{Suffix}

For chat-templated models, RSP embeddings are inserted immediately before the end-of-turn token. For raw text models, RSP embeddings are concatenated at the end of the prompt.

\paragraph{ChatML (Qwen Instruct / Base + ChatML).}
\begin{tcolorbox}[colback=gray!5, colframe=black!60, arc=3pt, boxrule=0.5pt, fontupper=\footnotesize\ttfamily, breakable]
\textcolor{purple}{<|im\_start|>}system\\
Please reason step by step, and put your final answer within \textbackslash boxed\{\}.\textcolor{purple}{<|im\_end|>}\\
\textcolor{purple}{<|im\_start|>}user\\
\{question\}\textcolor{blue!70!black}{[Random Embeddings ($L$ tokens)]}\textcolor{purple}{<|im\_end|>}\\
\textcolor{purple}{<|im\_start|>}assistant
\end{tcolorbox}

\paragraph{LLaMA Chat Template.}
\begin{tcolorbox}[colback=gray!5, colframe=black!60, arc=3pt, boxrule=0.5pt, fontupper=\footnotesize\ttfamily, breakable]
\textcolor{purple}{<|begin\_of\_text|>}\\
\textcolor{purple}{<|start\_header\_id|>}system\textcolor{purple}{<|end\_header\_id|>}\\
Cutting Knowledge Date: December 2023\\
Today Date: 26 Jul 2024\\
Please reason step by step, and put your final answer within \textbackslash boxed\{\}.\textcolor{purple}{<|eot\_id|>}\\
\textcolor{purple}{<|start\_header\_id|>}user\textcolor{purple}{<|end\_header\_id|>}\\
\{question\}\textcolor{blue!70!black}{[Random Embeddings ($L$ tokens)]}\textcolor{purple}{<|eot\_id|>}\\
\textcolor{purple}{<|start\_header\_id|>}assistant\textcolor{purple}{<|end\_header\_id|>}
\end{tcolorbox}

\paragraph{Raw Text (Qwen Base).}
\begin{tcolorbox}[colback=gray!5, colframe=black!60, arc=3pt, boxrule=0.5pt, fontupper=\footnotesize\ttfamily, breakable]
\{question\}\\
Please reason step by step, and put your final answer within \textbackslash boxed\{\}.\textcolor{blue!70!black}{[Random Embeddings ($L$ tokens)]}
\end{tcolorbox}

\subsection{Answer Extraction and Grading}\label{app:extraction}

The final answer is extracted from the model output through a fallback chain. Each step is attempted in order; if extraction fails, the next step is tried.

\begin{tcolorbox}[colback=gray!5, colframe=black!60, arc=3pt, boxrule=0.5pt, title=\textbf{Answer Extraction Fallback Chain}, fonttitle=\small, coltitle=black, colbacktitle=gray!20]
\small
\begin{enumerate}[leftmargin=*, nosep]
  \item \texttt{"final answer is \$...\$. I hope"} pattern (Minerva format)
  \item \texttt{\textbackslash boxed\{...\}}: last non-empty box is used; empty \texttt{\textbackslash boxed\{\}} are skipped
  \item Text following \texttt{"the answer is"}
  \item Text following \texttt{"final answer is"}
  \item Last number in the output (regex fallback)
\end{enumerate}
\end{tcolorbox}

\vspace{0.5em}
Extracted answers are normalized by removing \texttt{\$}, \texttt{\textbackslash left}, \texttt{\textbackslash right}, and unit strings. Grading is performed via symbolic equivalence checking: exact string match, numerical comparison (\texttt{math.isclose}, \texttt{rel\_tol=1e-4}), and SymPy symbolic simplification.

\subsection{Reproduced Prior Work Hyperparameters}\label{app:prior_hp}

All prior methods are reproduced on Qwen2.5-Math-1.5B-Instruct and Qwen2.5-Math-7B-Instruct, evaluated with greedy decoding (temperature $= 0$) and our unified answer extraction pipeline.

\paragraph{TTSV.} \emph{Prefix} length 20, trained for 20 epochs with AdamW optimizer, batch size 2, gradient accumulation 8 steps, and linear learning rate schedule. Learning rate is 1e-3 for Qwen-1.5B and 1e-5 for Qwen-7B.

\paragraph{LTPO.} Table~\ref{tab:hp_ltpo} reports the per-benchmark hyperparameters.

\begin{table}[H]
\centering
\caption{LTPO hyperparameters.}
\label{tab:hp_ltpo}
\footnotesize
\renewcommand{\arraystretch}{0.9}
\begin{tabular}{lccc}
\toprule
\textbf{Parameter} & \textbf{GSM8K} & \textbf{MATH-500} & \textbf{AIME24} \\
\midrule
tokens & 8 & 8 & 8 \\
steps & 20 & 20 & 20 \\
top\_k & 10 & 10 & 10 \\
sigma & 20 & 20 & 5 \\
sigma\_decay & 0.95 & 0.95 & 0.95 \\
lr & 1e-2 & 5e-3 & 5e-2 \\
\bottomrule
\end{tabular}
\end{table}

\paragraph{SoftCoT.} Projection module trained for 10 epochs. Thought tokens: 32 during training, 4 during evaluation. Batch size 4 for GSM8K, 1 for MATH with gradient accumulation 4. The small (assistant) model is Qwen2.5-Math-1.5B-Instruct in both configurations, while the large model is Qwen2.5-Math-1.5B-Instruct (learning rate 2e-5) or Qwen2.5-Math-7B-Instruct (learning rate 5e-6). For MATH-500 and AIME24 evaluation, we use the projection module trained on GSM8K, as it yields higher accuracy than the one trained on MATH.

\subsection{Full Entropy Curves}\label{app:entropy_full}

Figure~\ref{fig:app_full_entropy} shows the full normalized (0--100\%) generation trajectories for entropy, top-1 probability, and varentropy, and Table~\ref{tab:entropy_full} reports the corresponding scalar statistics including overall means, first-5\% means, and the average number of generated tokens per problem. All methods converge to similar levels after the initial generation stage, confirming that the distributional changes induced by RSP are localized to the early reasoning phase.

\begin{table}[H]
\centering
\caption{Per-step statistics for Qwen2.5-Math-1.5B-Instruct on MATH-500 across the full generation and the first 5\% of generation steps, together with the average number of generated tokens per problem. Top-1 Prob (overall) is reported as a weighted average over the first-10\%/middle/last-10\% segment means with weights $0.1/0.8/0.1$. Extension of Figure~\ref{fig:entropy_first5} (main text) and Figure~\ref{fig:app_full_entropy}.}
\label{tab:entropy_full}
\footnotesize
\setlength{\tabcolsep}{4pt}
\renewcommand{\arraystretch}{0.9}
\begin{tabular}{lccccccc}
\toprule
\textbf{Metric} & \textbf{Baseline} & \cellcolor{blue!10}\textbf{\emph{Prefix}} & \cellcolor{blue!10}\textbf{\emph{Infix}} & \cellcolor{blue!10}\textbf{\emph{Suffix}} & \textbf{LTPO} & \textbf{SoftCoT} & \textbf{TTSV} \\
\midrule
Tokens (mean)            & 575.7  & \cellcolor{blue!10}558.4  & \cellcolor{blue!10}549.9  & \cellcolor{blue!10}573.3  & 592.3  & 594.4  & 577.4  \\
\midrule
Entropy (first 5\%)      & 0.1332 & \cellcolor{blue!10}0.1366 & \cellcolor{blue!10}0.1402 & \cellcolor{blue!10}0.1941 & 0.1662 & 0.4218 & 0.1359 \\
Entropy (overall)        & 0.0871 & \cellcolor{blue!10}0.0881 & \cellcolor{blue!10}0.0899 & \cellcolor{blue!10}0.1049 & 0.0909 & 0.1059 & 0.0864 \\
\midrule
Top-1 Prob (first 5\%)   & 0.9535 & \cellcolor{blue!10}0.9523 & \cellcolor{blue!10}0.9525 & \cellcolor{blue!10}0.9373 & 0.9437 & 0.9156 & 0.9534 \\
Top-1 Prob (overall)     & 0.9684 & \cellcolor{blue!10}0.9681 & \cellcolor{blue!10}0.9678 & \cellcolor{blue!10}0.9647 & 0.9683 & 0.9651 & 0.9685 \\
\midrule
Varentropy (first 5\%)   & 0.2181 & \cellcolor{blue!10}0.2207 & \cellcolor{blue!10}0.2463 & \cellcolor{blue!10}0.4010 & 0.2953 & 2.2234 & 0.2174 \\
Varentropy (overall)     & 0.1353 & \cellcolor{blue!10}0.1375 & \cellcolor{blue!10}0.1422 & \cellcolor{blue!10}0.2038 & 0.1452 & 0.2404 & 0.1361 \\
\bottomrule
\end{tabular}
\end{table}

\begin{figure}[H]
  \centering
  \includegraphics[width=0.85\textwidth]{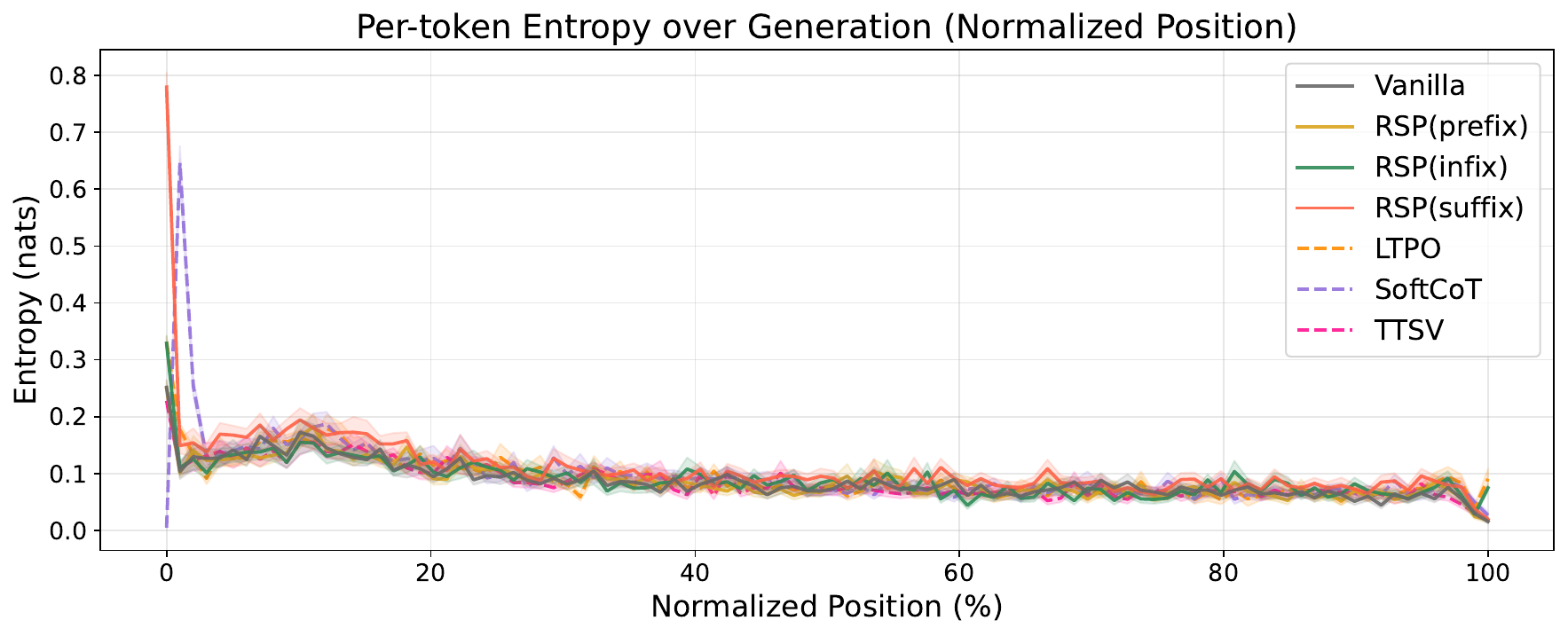}\\[0.4em]
  \includegraphics[width=0.85\textwidth]{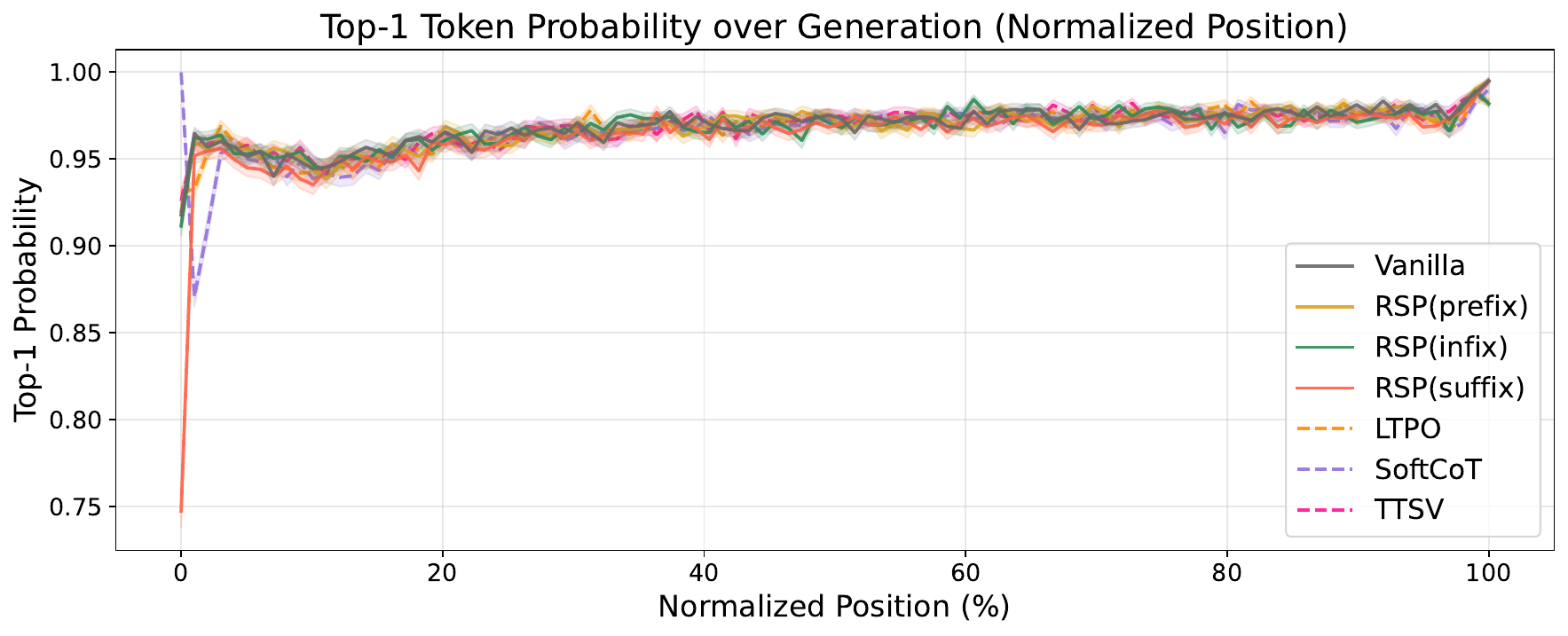}\\[0.4em]
  \includegraphics[width=0.85\textwidth]{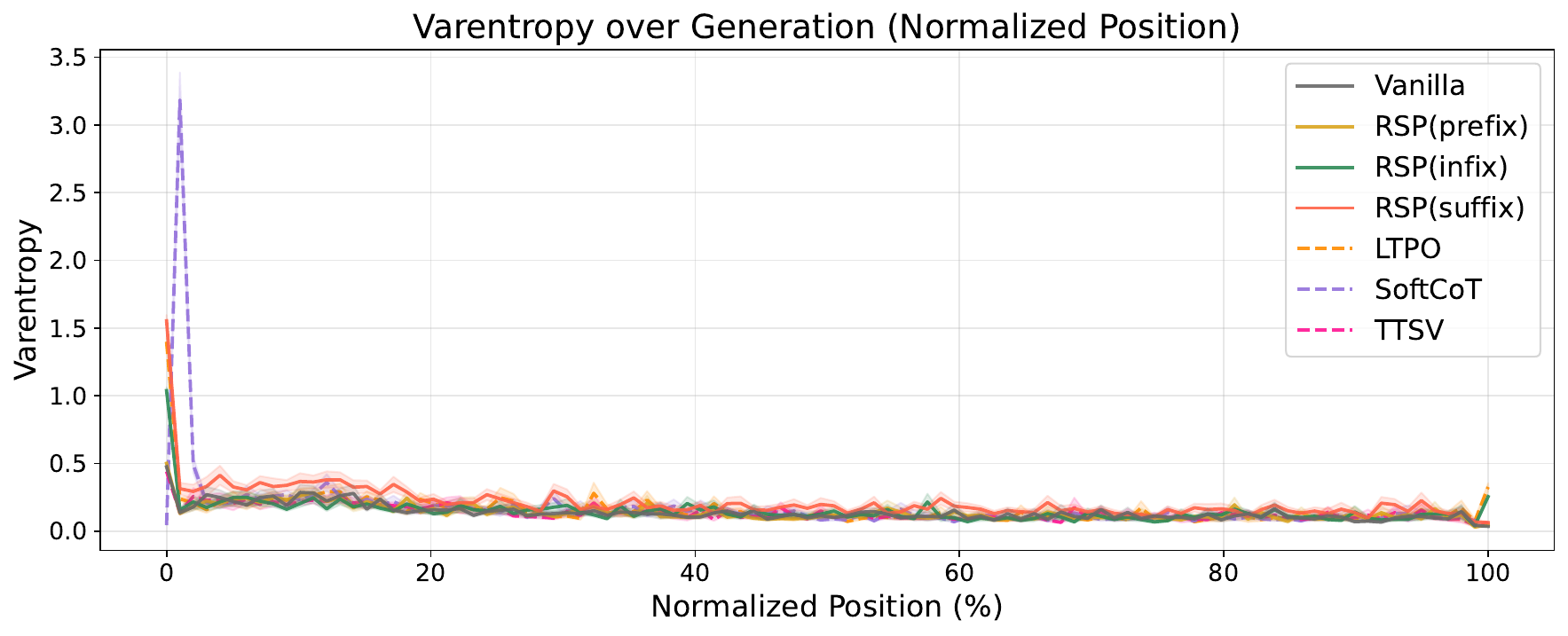}
  \caption{Mean entropy (top), top-1 probability (middle), and varentropy (bottom) over the full normalized generation trajectory (0--100\%). Averaged over MATH-500, shading indicates $\pm$1 SEM. Solid lines: RSP variants and the baseline. Dashed lines: prior \textit{soft prompt} methods. All methods converge after the initial stage.}
  \label{fig:app_full_entropy}
\end{figure}

\section{Token-Level Distribution Metrics}\label{app:token_metrics}

This appendix provides formal definitions for the metrics used in Section~\ref{sec:diversity}. Numerical results are reported in Table~\ref{tab:rsp_vs_temp} of the main text. We extract first-token logits via a single forward pass (no autoregressive generation) and store the top-100 token ids and logit values per problem. Temperature conditions reuse the baseline logits scaled by $1/\tau$, requiring no separate inference. RSP draws 16 independent seeds per problem (MATH-500), and reported metrics are the mean across the 16 seeds. Pass@1 accuracy is computed over 16 samples per problem, with the baseline at $\tau{=}1.0$ reaching $73.92 \pm 0.78\%$. Let $P_{\text{base}}$ denote the baseline next-token distribution at $\tau{=}1.0$ and $P_{\text{target}}$ the candidate distribution being compared (e.g., baselines at $\tau{=}2.0, 3.0$ or RSP at $\tau{=}1.0$). All metrics are computed analytically from saved logits at the first generation step.

\paragraph{Spearman rank correlation.}
Spearman's $\rho$ is the Pearson correlation coefficient between the token ranks of two distributions:
\begin{equation}
\rho \;=\; \mathrm{Pearson}\!\left(\mathrm{rank}(P_{\text{target}}),\; \mathrm{rank}(P_{\text{base}})\right).
\end{equation}
Because temperature scaling $P^{(\tau)} \propto P^{1/\tau}$ is a strictly monotone transform, the rank order of tokens is preserved exactly, so $\rho = 1$ for any $\tau > 0$ as a mathematical identity. Any value $\rho < 1$ therefore indicates rank reorganization beyond what temperature scaling can produce.

\paragraph{Mass outside top-$K$.}
Let $\mathrm{TopK}(P_{\text{base}})$ be the set of $K$ tokens with the highest probability under $P_{\text{base}}$. The probability mass that the candidate distribution places \emph{outside} this set is
\begin{equation}
\mathrm{MassOutside}_K \;=\; 1 - \sum_{t \in \mathrm{TopK}(P_{\text{base}})} P_{\text{target}}(t).
\end{equation}
This directly measures support expansion: how much probability the candidate assigns to tokens that are not in the baseline's preferred set. Reported values use $K = 10$.

\paragraph{Jensen--Shannon divergence.}
The symmetrized, bounded version of Kullback--Leibler divergence:
\begin{equation}
\mathrm{JS}(P, Q) \;=\; \tfrac{1}{2}\,\mathrm{KL}(P \,\|\, M) + \tfrac{1}{2}\,\mathrm{KL}(Q \,\|\, M), \qquad M = \tfrac{1}{2}(P + Q).
\end{equation}
We use base-2 logarithms so that $\mathrm{JS} \in [0, 1]$. Tokens absent from the stored top-100 are assigned a sentinel logit ($-10^9$) before softmax, which is negligible for our setting.

\section{Qualitative Analysis: How RSP Reshapes Reasoning Trajectories}\label{app:qualitative}

This appendix complements the outcome-level Pass@$N$ results of \S\ref{sec:diversity} with a qualitative case study on AIME24, using the same $N{=}16$ sampling budget. We compare two of the four conditions defined in \S\ref{sec:diversity}: \emph{Baseline} (condition (i), $\tau{=}1.0$) and \emph{RSP (indep.\ seed)} (condition (iv), $\tau{=}1.0$, \emph{suffix}, $L{=}20$). Setting: Qwen2.5-Math-1.5B-Instruct, 30 AIME24 problems, $16$ samples per problem per condition.

\subsection{Aggregate Pattern Frequencies}\label{app:qualitative_stats}

Table~\ref{tab:qualitative_patterns} tabulates how RSP shifts coverage relative to Baseline across the 30 AIME24 problems.

\begin{table}[H]
\centering
\caption{Coverage patterns across 30 AIME24 problems, $N{=}16$ samples per condition.}
\label{tab:qualitative_patterns}
\footnotesize
\renewcommand{\arraystretch}{0.95}
\begin{tabular}{lcl}
\toprule
\textbf{Pattern} & \textbf{Frequency} & \textbf{Problems} \\
\midrule
RSP-only correct (Baseline=0/16, RSP$\ge$1/16) & 5/30 (16.7\%) & \#6, \#8, \#14, \#22, \#25 \\
Baseline-only correct (Baseline$\ge$1/16, RSP=0/16) & 0/30 \;\;(0\%) & --- \\
Both correct                          & 6/30 (20.0\%)  & --- \\
Both wrong                            & 19/30 (63.3\%) & --- \\
\bottomrule
\end{tabular}
\end{table}

Within the $N{=}16$ budget, RSP produces strict gains on five problems and never causes a strict regression. The two case studies below illustrate two faces of the same phenomenon: divergent solution framing (\#6, \S\ref{app:qualitative_case1}) and divergent structural assumption (\#22, \S\ref{app:qualitative_case2}). In both cases, RSP and Baseline diverge already in the opening sentences of the trajectory, and the divergence determines whether the trajectory reaches the correct answer.

\subsection{Case Study 1: AIME24 Problem 6 (Constrained Optimization)}\label{app:qualitative_case1}

\paragraph{Problem.}
\begin{tcolorbox}[colback=gray!5, colframe=black!60, arc=3pt, boxrule=0.5pt, fontupper=\footnotesize, breakable]
Let $\mathcal{B}$ be the set of rectangular boxes with surface area $54$ and volume $23$. Let $r$ be the radius of the smallest sphere that can contain each of the rectangular boxes that are elements of $\mathcal{B}$. The value of $r^2$ can be written as $\frac{p}{q}$, where $p$ and $q$ are relatively prime positive integers. Find $p+q$.
\end{tcolorbox}
Ground truth: $\boxed{721}$, with the optimal box at $a = b = 2$, $c = 23/4$ giving $r^2 = 657/64$.

\paragraph{Aggregate result.} Baseline produces 0/16 correct; RSP produces 1/16 correct. Baseline's 16 wrong final answers span 10 distinct values, with the modal answer $31$ appearing in 4/16 samples; the correct value $721$ does not appear. RSP's 16 samples produce 11 distinct values and include $721$ exactly once.

\paragraph{Failure mode (Baseline).} Baseline's representative trajectory (final answer 49) opens with an \emph{algebraic identity} frame and never solves for the actual box dimensions.

\begin{tcolorbox}[colback=gray!5, colframe=black!60, arc=3pt, boxrule=0.5pt, fontupper=\footnotesize, breakable, title={Baseline trajectory (final answer 49) --- excerpts}]
\textbf{Opening:} To solve the problem, we start by defining the dimensions of a rectangular box as $a$, $b$, and $c$. The surface area and volume of the box are given by $2(ab + bc + ca) = 54$ and $abc = 23$. We need to find the radius $r$ of the smallest sphere that can contain the box, given by $r = \tfrac{1}{2}\sqrt{a^2 + b^2 + c^2}$. First, we need to find the expression for $a^2 + b^2 + c^2$ in terms of the given equations.

\textbf{Identity manipulation:} Squaring the surface area equation yields $(ab + bc + ca)^2 = 2916$, which expands to $a^2b^2 + b^2c^2 + c^2a^2 + 2abc(a + b + c) = 2916$. Substituting $2abc = 46$ and $(a+b+c)^2 = x + 54$ where $x := a^2 + b^2 + c^2$, then attempting to express $a^4 + b^4 + c^4$ in terms of $x$, the manipulation collapses to the tautology $x^2 = x^2 - 2y + 2y$.

\textbf{Termination by guess:} ``This equation is always true, so we need another way to find $x$. \dots After solving, we find $x = 45$.'' Thus $r^2 = 45/4$, and $p + q = 45 + 4 = \boxed{49}$.
\end{tcolorbox}

\textbf{Diagnosis.} The trajectory frames the task as \emph{find the closed form of $a^2+b^2+c^2$ from the given constraints} and never attempts to solve for the actual dimensions. Two consecutive identity manipulations collapse to a tautology that does not yield $x$, after which the trajectory commits to $x = 45$ without justification (``After solving, we find''). This pattern --- identity manipulation, tautology, terminal guess --- recurs across Baseline's 16 samples; the constraint set under-determines $a^2 + b^2 + c^2$ in closed form, and the algebraic-identity framing has no path to the correct value.

\paragraph{Success mode (RSP).} The correct RSP trajectory (final answer 721) opens with an \emph{optimization} frame and a symmetry assumption that reduces the system to a single cubic.

\begin{tcolorbox}[colback=gray!5, colframe=black!60, arc=3pt, boxrule=0.5pt, fontupper=\footnotesize, breakable, title={RSP trajectory (final answer 721) --- excerpts}]
\textbf{Opening:} To solve the problem, we need to find the dimensions of the rectangular box that \emph{minimizes} the radius of the smallest sphere that can contain it, given the surface area and volume constraints. The radius is $r = \tfrac{1}{2}\sqrt{a^2 + b^2 + c^2}$. To minimize $r$, we minimize $a^2 + b^2 + c^2$.

\textbf{Symmetry reduction:} A more straightforward approach is to use symmetry: $a^2 + b^2 + c^2$ is minimized when $a$, $b$, $c$ are as close to each other as possible. Assume $a = b$ and solve for $c$: $2a^2 + 2ac = 27$ and $a^2 c = 23$. From $a^2 c = 23$, $c = 23/a^2$. Substituting: $2a^2 + 46/a = 27$, multiplying through by $a$ gives $2a^3 - 27a + 46 = 0$.

\textbf{Rational root and termination:} By the Rational Root Theorem, $a = 2$ is a root. Then $c = 23/4$, $a^2 + b^2 + c^2 = 4 + 4 + 529/16 = 657/16$, $r^2 = 657/64$, and $p + q = \boxed{721}$.
\end{tcolorbox}

\textbf{Diagnosis.} Three properties of the opening drive the success: \textbf{(i)} the task is framed as \emph{minimizing} $r$ (and thus $a^2+b^2+c^2$) rather than as a closed-form expression hunt; \textbf{(ii)} the path is committed to solving for $a, b, c$ instead of for $a^2+b^2+c^2$ alone; \textbf{(iii)} a symmetry assumption $a = b$ reduces the system to one cubic in one unknown. The cubic $2a^3 - 27a + 46 = 0$ has $a = 2$ as a rational root, leading to the explicit configuration $(2, 2, 23/4)$ and the correct $r^2 = 657/64$.

\paragraph{Interpretation.} Baseline's failure is rooted in the \emph{opening sentence}: framing the task as ``find the expression for $a^2+b^2+c^2$ from the given equations'' commits the trajectory to a path that does not have a closed-form solution under the given constraints. RSP's perturbation produces, in 1 of 16 attempts, a trajectory whose opening frame is \emph{constrained optimization} with a symmetry-based variable reduction. The two trajectories share the same model weights, the same problem constraints, and the same sampling temperature; the input-side RSP perturbation is the only intervention, and its visible effect is the different framing the model commits to in the first few sentences.

\subsection{Case Study 2: AIME24 Problem 22 (Constrained List Construction)}\label{app:qualitative_case2}

\paragraph{Problem.}
\begin{tcolorbox}[colback=gray!5, colframe=black!60, arc=3pt, boxrule=0.5pt, fontupper=\footnotesize, breakable]
A list of positive integers has the following properties:
\begin{itemize}\setlength{\itemsep}{0pt}
  \item The sum of the items in the list is $30$.
  \item The unique mode of the list is $9$.
  \item The median of the list is a positive integer that does not appear in the list itself.
\end{itemize}
Find the sum of the squares of all the items in the list.
\end{tcolorbox}
Ground truth: $\boxed{236}$, corresponding to $\{5, 7, 9, 9\}$ with median $\frac{7+9}{2} = 8$.

\paragraph{Aggregate result.} Baseline produces 0/16 correct; RSP produces 3/16 correct. Baseline's 16 samples produce 16 distinct final answers (no two agree); the correct value 236 does not appear. RSP's 16 samples concentrate slightly more, with 236 (3 times) and 248 (3 times) tied as modal answers and the remaining mass spread across 10 distinct values.

\paragraph{Structural-assumption split.} The constraint ``the median is a positive integer that does not appear in the list'' \emph{requires} an even-length list: for an odd-length list the median is by definition an element of the list. Across Baseline's 16 samples, 6 commit to an odd-length list (typically 5 elements) at the opening and only 3 consider an even-length list. Across RSP's 16 samples, 8 commit to odd and 6 consider even. The shift in the even-length consideration rate (3/16 $\to$ 6/16) is the structural correlate of RSP's accuracy gain.

\paragraph{Failure mode (Baseline).} Baseline's representative trajectory (final answer 224) commits to an odd-length list \emph{at the opening} and never reconsiders.

\begin{tcolorbox}[colback=gray!5, colframe=black!60, arc=3pt, boxrule=0.5pt, fontupper=\footnotesize, breakable, title={Baseline trajectory (final answer 224) --- excerpts}]
\textbf{Opening commitment:} ``Let's start by assuming the list has an odd number of elements. If the list has $2k+1$ elements, the median is the $(k+1)$-th element. Let's try with the smallest possible odd number\dots''

\textbf{Construction:} 5 elements, $a_1 + a_2 + 9 + a_4 + 9 = 30$ so $a_1 + a_2 + a_4 = 12$. Try $a_1 = 2$, $a_2 = 3$, $a_4 = 7$: list $\{2, 3, 7, 9, 9\}$, median $7$, ``which does not appear in the list, so this is a valid candidate.''

\textbf{Termination:} Sum of squares $= 4 + 9 + 49 + 81 + 81 = \boxed{224}$.
\end{tcolorbox}

\textbf{Diagnosis.} Two errors compound: (i) the trajectory \emph{defaults to odd length} at the opening without justification, and the constraint ``median not in the list'' forces even length, which the trajectory never derives; (ii) the median value $7$ is verified as a positive integer but \emph{not} verified as absent from the list ($7$ is in fact an element). The same structural assumption recurs across most of Baseline's 16 samples; the elements vary, but the odd-length commitment and the incomplete constraint check are constant.

\paragraph{Success mode (RSP).} The correct RSP trajectory (final answer 236) considers $n = 4$ from the outset and treats the median formula by parity.

\begin{tcolorbox}[colback=gray!5, colframe=black!60, arc=3pt, boxrule=0.5pt, fontupper=\footnotesize, breakable, title={RSP trajectory (final answer 236) --- excerpts}]
\textbf{Opening (Step 2: median by parity).} ``If the list has an odd number of elements, the median is the middle element. If the list has an even number of elements, the median is the average of the two middle elements. Since the median is a positive integer that does not appear in the list, it must be an integer and not equal to any of the other numbers in the list.''

\textbf{Step 3: starting at $n = 4$.} ``$n \ge 4$ (at least two 9's). Case 1: $n = 4$. Sum: $9 + 9 + x + y = 30$, so $x + y = 12$.''

\textbf{Construction:} Try $x = 5$, $y = 7$: list $\{5, 7, 9, 9\}$, median $\tfrac{7+9}{2} = 8$ (integer, not in list). All conditions satisfied; sum of squares $= 25 + 49 + 81 + 81 = \boxed{236}$.
\end{tcolorbox}

\textbf{Diagnosis.} Three properties distinguish this trajectory: \textbf{(i)} \emph{even length is considered first} ($n = 4$ enumerated as the initial case); \textbf{(ii)} the median is computed as an arithmetic mean for even length, producing $8 \notin \{5, 7, 9, 9\}$; \textbf{(iii)} the constraint check is performed against the candidate \emph{value} ($8$), not against the structural position. The other two correct RSP samples reach 236 through partially incorrect intermediate reasoning, suggesting that some RSP samples that begin with the same odd-length assumption as Baseline can still arrive at the correct numerical answer.

\paragraph{Interpretation.} Baseline's failure is rooted in the \emph{first structural assumption} after the conditions are listed: the trajectory commits to an odd-length list within its first few hundred tokens and never reconsiders. RSP's perturbation increases the fraction of samples that consider $n = 4$ from 3/16 to 6/16, and one of these reaches the correct constraint check. The split between odd-length and even-length openings is the empirical fingerprint of the template-selection shift; on a small fraction of samples, the alternative template happens to be the structurally consistent one.

\subsection{Summary of Case Studies}\label{app:qualitative_summary}

Both case studies exhibit the same shape. Baseline's 16 samples cluster around a single failure template (algebraic-identity framing for \#6; odd-length-list commitment for \#22) and never break out of it within the sample budget. RSP does not change the model's underlying knowledge: most RSP samples fall into the same templates as Baseline. What RSP changes is \emph{which template the model commits to in the opening sentences}. On a small fraction of trajectories, the alternative template happens to admit a solvable subproblem (the optimization framing for \#6, the even-length list for \#22), and the trajectory reaches the correct answer. The accuracy lift (0/16 $\to$ 1/16 on \#6, 0/16 $\to$ 3/16 on \#22) is the outcome-level signature of this opening-sentence divergence, consistent with the input-side branch-opening mechanism of \S\ref{sec:improve}: the perturbation is largest while the KV cache is short and the trajectory has not yet committed to a structural template.

\section{Prompt-Law Properties Used by the Theory}\label{app:why_random}

This appendix isolates the only properties of the RSP law used in the main text: centeredness, direction-agnostic covariance under a variance budget, and positive probability on every nonempty open set. We avoid stronger semantic claims because Section~\ref{sec:multipath} needs only these geometric and measure-theoretic facts.

\subsection{Centered Perturbations Do Not Impose Systematic First-Order Drift}

Let $z^{\text{Base}}_a$ denote the no-RSP output logit at vocab token $a$ and $z^{\text{RSP}}_a(\bar h)$ the corresponding logit when the centered RSP perturbation takes value $\bar h$. The (true) vocab-logit gap under RSP is $\Delta_{ab}(\bar h) := z^{\text{RSP}}_a(\bar h) - z^{\text{RSP}}_b(\bar h)$, distinct from the no-RSP baseline gap $\Delta^{\text{Base}}_{ab} := z^{\text{Base}}_a - z^{\text{Base}}_b$. Note that $\bar h = 0$ corresponds to the centered RSP at its entrywise mean rather than to the no-RSP forward pass, so in general $\Delta_{ab}(0) \ne \Delta^{\text{Base}}_{ab}$. The transformer's logit map is non-linear in $\bar h$, but a first-order Taylor expansion around $\bar h = 0$ yields the local surrogate
\[
\Delta_{ab}^{\mathrm{lin}}(\bar{h}) \;:=\; \Delta_{ab} + b_{ab}^\top \bar{h},
\]
with $\Delta_{ab} := \Delta_{ab}(0)$ and $b_{ab} := \nabla_{\bar h}(z^{\text{RSP}}_a - z^{\text{RSP}}_b)|_{\bar h = 0}$. The proposition below is stated for the surrogate $\Delta_{ab}^{\mathrm{lin}}$. For the true non-linear gap $\Delta_{ab}(\bar h)$, the first-order term has zero mean under centered perturbations, but the Taylor remainder $r_{ab}(\bar h) = O(\|\bar h\|^2)$ may contribute a second-order mean shift; we make no claim about its sign or magnitude.

\begin{proposition}[No systematic first-order drift in the surrogate]\label{prop:centered_no_drift}
If $\mathbb{E}[\bar{h}] = 0$, then
\[
\mathbb{E}\big[\Delta_{ab}^{\mathrm{lin}}(\bar{h})\big] = \Delta_{ab}
\]
for every token pair $(a,b)$. If a deterministic centered prompt $\bar h_0$ is reused across runs, then
\[
\Delta_{ab}^{\mathrm{lin}}(\bar h_0) - \Delta_{ab} = b_{ab}^\top \bar h_0
\]
is a fixed directional bias.
\end{proposition}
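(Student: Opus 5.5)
The plan is to use linearity of expectation on the affine surrogate defined just above the statement, treating $\Delta_{ab}=\Delta_{ab}(0)$ and $b_{ab}=\nabla_{\bar h}(z^{\text{RSP}}_a-z^{\text{RSP}}_b)|_{\bar h=0}$ as deterministic quantities. They are fixed by the model weights, the input $\mathbf{X}$, the other (zeroed) centered RSP positions, and the evaluation point $\bar h=0$. None of them depends on the random draw $\bar h$. Fixing the token pair $(a,b)$, the map $\bar h\mapsto\Delta^{\mathrm{lin}}_{ab}(\bar h)=\Delta_{ab}+b_{ab}^\top\bar h$ is affine.

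For the first claim, I will first check integrability. $\bar h$ has a finite first moment: for the Gaussian law of Eq.~\eqref{eq:rsp_def} this is immediate, and for a general centered law, $\mathbb{E}[\bar h]=0$ presupposes it. Hence $|b_{ab}^\top\bar h|\le\|b_{ab}\|\,\|\bar h\|$ is integrable. Linearity then gives
\[
\mathbb{E}\big[\Delta^{\mathrm{lin}}_{ab}(\bar h)\big]=\Delta_{ab}+b_{ab}^\top\mathbb{E}[\bar h]=\Delta_{ab}.
\]
Since $(a,b)$ was arbitrary, the identity holds for every pair.

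For the second claim, I will substitute the deterministic vector $\bar h_0$ into the surrogate, which gives $\Delta^{\mathrm{lin}}_{ab}(\bar h_0)-\Delta_{ab}=b_{ab}^\top\bar h_0$. Because this value involves no randomness, it is the same in every run that reuses $\bar h_0$. It is therefore a fixed directional bias along $\bar h_0$, and averaging over runs cannot cancel it. By contrast, in the first claim the shift $b_{ab}^\top\bar h$ is random with mean zero, and under the isotropic Gaussian its variance is $\sigma_E^2\|b_{ab}\|^2$.

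There is no real obstacle here. The only point needing care is to state explicitly that $b_{ab}$ and $\Delta_{ab}$ are non-random. Otherwise expectation could not pass through the product $b_{ab}^\top\bar h$.

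I will also note that the statement concerns only the surrogate: the Taylor remainder of the true gap is not controlled. Finally, I will remark that "centered" refers to $\bar h=h-\mu_E\mathbf{1}_d$, so the comparison point is $\Delta_{ab}(0)$ and not the no-RSP gap $\Delta^{\text{Base}}_{ab}$.
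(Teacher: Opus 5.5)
Your proof is correct and follows the paper's argument: linearity of expectation with $\mathbb{E}[\bar h]=0$ gives the first claim, and direct substitution of the deterministic $\bar h_0$ gives the second. Your added checks, that $\bar h$ is integrable and that $\Delta_{ab}$ and $b_{ab}$ are non-random, are sound points that the paper leaves implicit.
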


\begin{proof}
The centered case is immediate from linearity of expectation:
\[
\mathbb{E}\big[\Delta_{ab}^{\mathrm{lin}}(\bar{h})\big]
=
\Delta_{ab} + b_{ab}^\top \mathbb{E}[\bar{h}]
=
\Delta_{ab}.
\]
The deterministic case follows by substitution.
\end{proof}

This proposition rules out a run-averaged first-order bias. It does not say that individual prompt draws leave the logits unchanged.

\subsection{Proof of Proposition~\ref{thm:minimax_directional_coverage}}

For a unit vector $u$, the first-order perturbation energy available along $u$ is $\mathrm{Var}_{h\sim D}(u^\top h)=u^\top \Sigma_D u$. The worst-case directional energy is therefore the minimum Rayleigh quotient of $\Sigma_D$.

\begin{proof}
For any symmetric covariance matrix $\Sigma_D$,
\[
\min_{\|u\|=1} u^\top \Sigma_D u = \lambda_{\min}(\Sigma_D).
\]
Let the eigenvalues of $\Sigma_D$ be $\lambda_1,\dots,\lambda_d$. Then
\[
\lambda_{\min}(\Sigma_D)
\le
\frac{1}{d}\sum_{m=1}^{d}\lambda_m
=
\frac{\mathrm{tr}(\Sigma_D)}{d}
\le
\frac{\rho^2}{d}.
\]
Equality holds if and only if all eigenvalues are equal, i.e., $\Sigma_D = (\rho^2/d)\mathbf{I}_d$.
\end{proof}

The proof uses only covariance. We choose a Gaussian law for RSP because it adds the open-set reachability property proved next.

\subsection{Open-Set Reachability of Isotropic Gaussian Prompts}

\begin{proposition}[Open-set reachability]\label{prop:gaussian_open_support}
Let $\bar h \sim \mathcal{N}(0,\sigma^2\mathbf{I}_d)$ with $\sigma>0$. Then every nonempty open set $U \subseteq \mathbb{R}^d$ satisfies
\[
\Pr(\bar h \in U) > 0.
\]
\end{proposition}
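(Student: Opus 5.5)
The plan is to reduce the claim to a small Euclidean ball inside $U$ and then use strict positivity of the Gaussian density. Since $U$ is nonempty and open, I pick a point $x_0\in U$ and a radius $r>0$ such that the open ball $B(x_0,r)$ is contained in $U$. By monotonicity of probability, $\Pr(\bar h\in U)\ge \Pr(\bar h\in B(x_0,r))$. It therefore suffices to show that this ball has positive probability.

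The key fact is that $\mathcal{N}(0,\sigma^2\mathbf{I}_d)$ with $\sigma>0$ is absolutely continuous with respect to Lebesgue measure. Its density is
\[
\varphi(x)=(2\pi\sigma^2)^{-d/2}\exp\!\bigl(-\|x\|^2/(2\sigma^2)\bigr),
\]
which is strictly positive everywhere. On the compact closure of $B(x_0,r)$, the norm satisfies $\|x\|\le \|x_0\|+r$. This gives a uniform lower bound
\[
\varphi(x)\ge c:=(2\pi\sigma^2)^{-d/2}\exp\!\bigl(-(\|x_0\|+r)^2/(2\sigma^2)\bigr)>0
\]
on the ball. Integrating this bound yields $\Pr(\bar h\in B(x_0,r))\ge c\cdot\mathrm{vol}(B(x_0,r))$. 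The right-hand side is positive because a ball of positive radius has positive Lebesgue measure, namely $\pi^{d/2}r^d/\Gamma(d/2+1)$.

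I do not expect a real obstacle. The only point that needs care is to use $\sigma>0$ explicitly: it is what makes the covariance nondegenerate, so the law has a density at all. A degenerate Gaussian would fail, since it is supported on a proper subspace that can miss open sets. As a remark, the same argument applies verbatim to the uncentered RSP law of Eq.~\eqref{eq:rsp_def}, because it is a translate of $\bar h$ by $\mu_E\mathbf{1}_d$ and translation maps open sets to open sets. By independence of the $L$ draws, it also extends to the joint law on $\mathbb{R}^{L\times d}$: the joint law is again a nondegenerate Gaussian with an everywhere-positive density.
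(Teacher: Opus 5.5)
Your proof is correct and takes the same route as the paper: pick a ball inside $U$ and integrate the everywhere-positive Gaussian density over it. Your explicit uniform lower bound $c$ on the ball just spells out the paper's step that a strictly positive density has positive integral over a set of positive Lebesgue measure.
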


\begin{proof}
The Gaussian density
\[
(2\pi \sigma^2)^{-d/2}\exp\!\left(-\frac{\|\bar h\|^2}{2\sigma^2}\right)
\]
is strictly positive at every point of $\mathbb{R}^d$. Every nonempty open set contains a ball of positive Lebesgue measure, and integrating a strictly positive density over that ball gives positive probability.
\end{proof}

This is the only support property used in the informal branching argument of \S\ref{sec:improve} and in Appendix~\ref{app:topk_entry}.

\section{Proofs for the Transformer-Level Mechanism}\label{app:proofs}

This appendix follows the same order as the main theory section: exploration, annealing, and performance gain. The prompt-law facts used before these proofs are collected in Appendix~\ref{app:why_random}.

\subsection{Exploration: attention decomposition and perturbation size}\label{app:rsp_noise}

\begin{proposition}[Attention decomposition]\label{prop:attention_decomposition}
For one attention head at query position $i$ in layer $\ell$, let $\alpha_{ij}^{(\ell)}$ be the attention weights over $n\ge1$ unmasked real tokens and $L\ge1$ random tokens, with finite attention logits. Define
\[
w_{r,i}^{(\ell)} := \sum_{j=1}^{L} \alpha_{i,n+j}^{(\ell)}
\]
and, for $j \in [n]$,
\[
\tilde{\alpha}_{ij}^{(\ell)} := \frac{\alpha_{ij}^{(\ell)}}{1-w_{r,i}^{(\ell)}}.
\]
Then
\[
\mathbf{o}_i^{(\ell)}
=
(1-w_{r,i}^{(\ell)})\,\tilde{\mathbf{o}}_i^{(\ell)}
+
\boldsymbol{\eta}_i^{(\ell)},
\]
where
\[
\tilde{\mathbf{o}}_i^{(\ell)} := \sum_{j=1}^{n} \tilde{\alpha}_{ij}^{(\ell)} \mathbf{v}_j^{(\ell)}
\qquad\text{and}\qquad
\boldsymbol{\eta}_i^{(\ell)} := \sum_{j=1}^{L} \alpha_{i,n+j}^{(\ell)} \mathbf{v}_{r_j}^{(\ell)}.
\]
\end{proposition}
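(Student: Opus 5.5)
The plan is to write out the head output and use nothing beyond the softmax normalization identity. This repeats the derivation given for Eq.~\eqref{eq:decomp} in the main text, but I will also check that the renormalized weights are well defined.

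\textbf{Step 1 (head output).} By definition of scaled dot-product attention over the $n+L$ unmasked keys, the head output is
\[
\mathbf{o}_i^{(\ell)}=\sum_{j=1}^{n}\alpha_{ij}^{(\ell)}\mathbf{v}_j^{(\ell)}+\sum_{j=1}^{L}\alpha_{i,n+j}^{(\ell)}\mathbf{v}_{r_j}^{(\ell)}.
\]
The second sum is $\boldsymbol{\eta}_i^{(\ell)}$ by definition.

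\textbf{Step 2 (well-definedness).} This is the only step that needs care. Since all logits are finite, every softmax weight $\alpha_{ij}^{(\ell)}=\exp(s_{ij})/\sum_k\exp(s_{ik})$ is strictly positive. Because $n\ge1$, it follows that $1-w_{r,i}^{(\ell)}=\sum_{j=1}^n\alpha_{ij}^{(\ell)}>0$. This uses the normalization $\sum_{j\le n}\alpha_{ij}^{(\ell)}+w_{r,i}^{(\ell)}=1$. Hence the $\tilde\alpha_{ij}^{(\ell)}$ are well defined and positive, and they sum to $1$. So $\tilde{\mathbf{o}}_i^{(\ell)}$ is a genuine convex combination of the real-token values.

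\textbf{Step 3 (factoring).} Multiply and divide the first sum by $1-w_{r,i}^{(\ell)}$:
\[
\sum_{j=1}^n\alpha_{ij}^{(\ell)}\mathbf{v}_j^{(\ell)}=(1-w_{r,i}^{(\ell)})\sum_{j=1}^n\tilde\alpha_{ij}^{(\ell)}\mathbf{v}_j^{(\ell)}=(1-w_{r,i}^{(\ell)})\tilde{\mathbf{o}}_i^{(\ell)}.
\]
Substituting this into Step 1 gives the claim.

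No step is a real obstacle. The only thing that could fail is division by zero, and Step 2 rules it out.
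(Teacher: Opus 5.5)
Your proposal is correct and matches the paper's proof: write out the head output, use softmax positivity (finite logits) together with $n\ge1$ to get $1-w_{r,i}^{(\ell)}=\sum_{j\le n}\alpha_{ij}^{(\ell)}>0$, and factor this out of the real-token sum. Your added check that the $\tilde{\alpha}_{ij}^{(\ell)}$ sum to $1$ is the same remark made in the main-text derivation of Eq.~\eqref{eq:decomp}.
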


\begin{proof}
The head output is
\[
\mathbf{o}_i^{(\ell)}
=
\sum_{j=1}^{n} \alpha_{ij}^{(\ell)} \mathbf{v}_j^{(\ell)}
+
\sum_{j=1}^{L} \alpha_{i,n+j}^{(\ell)} \mathbf{v}_{r_j}^{(\ell)}.
\]
By the softmax positivity and the existence of at least one unmasked real token, $\sum_{j=1}^{n} \alpha_{ij}^{(\ell)} = 1-w_{r,i}^{(\ell)} > 0$, so
\[
\sum_{j=1}^{n} \alpha_{ij}^{(\ell)} \mathbf{v}_j^{(\ell)}
=
(1-w_{r,i}^{(\ell)}) \sum_{j=1}^{n} \frac{\alpha_{ij}^{(\ell)}}{1-w_{r,i}^{(\ell)}} \mathbf{v}_j^{(\ell)}
=
(1-w_{r,i}^{(\ell)})\,\tilde{\mathbf{o}}_i^{(\ell)}.
\]
Substituting this identity gives the decomposition.
\end{proof}

\begin{corollary}[Magnitude scales with random-token attention]\label{cor:perturbation_magnitude}
For every realization of the random values,
\[
\|\boldsymbol{\eta}_i^{(\ell)}\|
\le
w_{r,i}^{(\ell)} \max_{j \in [L]} \|\mathbf{v}_{r_j}^{(\ell)}\|.
\]
\end{corollary}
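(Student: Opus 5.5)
The bound follows directly from the definition of $\boldsymbol{\eta}_i^{(\ell)}$ in Proposition~\ref{prop:attention_decomposition}, using the triangle inequality and the fact that the attention weights are nonnegative. Starting from $\boldsymbol{\eta}_i^{(\ell)} = \sum_{j=1}^{L} \alpha_{i,n+j}^{(\ell)} \mathbf{v}_{r_j}^{(\ell)}$, I apply the triangle inequality and absolute homogeneity of the norm to get
\[
\|\boldsymbol{\eta}_i^{(\ell)}\| \le \sum_{j=1}^{L} |\alpha_{i,n+j}^{(\ell)}|\,\|\mathbf{v}_{r_j}^{(\ell)}\|.
\]
Softmax outputs are strictly positive, since the logits are finite, so $|\alpha_{i,n+j}^{(\ell)}| = \alpha_{i,n+j}^{(\ell)}$.

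Next I bound each $\|\mathbf{v}_{r_j}^{(\ell)}\|$ by $\max_{j'\in[L]}\|\mathbf{v}_{r_{j'}}^{(\ell)}\|$. This maximum is finite because $L$ is finite, and it can be pulled out of the sum. What remains is $\sum_{j=1}^{L}\alpha_{i,n+j}^{(\ell)}$, which is $w_{r,i}^{(\ell)}$ by definition. That gives the stated inequality.

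The argument uses no distributional property of the RSP draw, so it holds pointwise for every realization of the random values. No step is genuinely hard. The only care needed is to note that nonnegativity of the weights is what lets the bound be a convex-combination estimate with total mass $w_{r,i}^{(\ell)}$, rather than a cruder bound by $L$ times the maximum.
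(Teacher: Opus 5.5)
Your proposal is correct and follows the paper's proof: the triangle inequality over the $L$ random-token terms, nonnegativity of the softmax weights, and bounding each $\|\mathbf{v}_{r_j}^{(\ell)}\|$ by the maximum so that the remaining weights sum to $w_{r,i}^{(\ell)}$. Your explicit remark that positivity of the weights (from finite logits) removes the absolute values makes precise a step the paper leaves implicit.
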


\begin{proof}
By the triangle inequality,
\[
\|\boldsymbol{\eta}_i^{(\ell)}\|
=
\left\|
\sum_{j=1}^{L} \alpha_{i,n+j}^{(\ell)} \mathbf{v}_{r_j}^{(\ell)}
\right\|
\le
\sum_{j=1}^{L} \alpha_{i,n+j}^{(\ell)} \|\mathbf{v}_{r_j}^{(\ell)}\|
\le
w_{r,i}^{(\ell)} \max_{j \in [L]} \|\mathbf{v}_{r_j}^{(\ell)}\|.
\]
\end{proof}

Corollary~\ref{cor:perturbation_magnitude} is the only quantitative fact needed in the main text. No independence assumption between attention weights and random keys is required. Once $w_{r,i}^{(\ell)}$ is small, the random contribution must be small as well.

\subsection{Corollaries of Theorem~\ref{thm:decay} on KV cache growth}

The fixed-gap decay bound yields the corollaries below.

\begin{corollary}[Sequence-wise annealing of the fixed-gap envelope]\label{cor:sequence_annealing}
For fixed $L$ and $\Delta_i^{(\ell)}$, the upper bound
\[
f(n)=\frac{L}{L+n\exp(\Delta_i^{(\ell)}/\sqrt{d_{\text{head}}})}
\]
is strictly decreasing in $n$ and tends to $0$ as $n\to\infty$. KV cache growth alone therefore narrows the largest RSP attention mass compatible with that gap.
\end{corollary}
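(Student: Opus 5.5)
This corollary follows directly from Theorem~\ref{thm:decay}, applied at layer $\ell$ with gap $\Delta_i^{(\ell)}$. The plan has three steps.

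\textbf{Step 1: the envelope is an upper bound.} For each $n\ge1$, Theorem~\ref{thm:decay} gives $w_{r,i}^{(\ell)} \le f(n)$ for every configuration of keys in which $L$ RSP keys and $n$ real keys realize the gap $\Delta_i^{(\ell)}$. So $f(n)$ is an upper bound on the RSP attention mass compatible with that gap.

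\textbf{Step 2: monotonicity.} Set $c := \exp(\Delta_i^{(\ell)}/\sqrt{d_{\text{head}}})$. Since the logits are finite, $c>0$. Also $L\ge1$ is fixed. Then $f(n)=L/(L+cn)$, and the denominator is strictly increasing in $n$ and positive. Hence $f$ is strictly decreasing, which can be seen either from the derivative $-Lc/(L+cn)^2<0$ or from comparing $n<n'$ directly. The limit is also immediate: $L+cn\to\infty$, so $f(n)\to0$.

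\textbf{Step 3: link to decoding.} Each autoregressive step appends one real key to the KV cache, so $n$ increases by one. The RSP keys are untouched, so $L$ is unchanged. At a fixed gap, the admissible maximum of $w_{r,i}^{(\ell)}$ therefore shrinks step by step.

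The only point needing care is the interpretation in this last step, not any calculation. The statement holds only along sequences where the gap is held fixed. If the actual gap varies with $n$, the claim is about the family of fixed-gap envelopes, not about the realized $w_{r,i}^{(\ell)}$. I would state this caveat explicitly, mirroring the discussion after Theorem~\ref{thm:decay}, and claim no more than that.
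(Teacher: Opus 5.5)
Your proposal is correct and matches the paper's proof. Both write the envelope as $L/(L+cn)$ with $c=\exp(\Delta_i^{(\ell)}/\sqrt{d_{\text{head}}})>0$, show it is strictly decreasing because the derivative $-Lc/(L+cn)^2$ is negative, get the limit from the denominator growing linearly, and add the same caveat that the claim concerns the fixed-gap envelope rather than the realized $w_{r,i}^{(\ell)}$.
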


\noindent During autoregressive decoding $\Delta_i^{(\ell)}$ also moves because queries, keys, and hidden states co-evolve. The accurate reading is not ``the realized mass decreases at every step'' but ``the envelope compatible with a fixed gap shrinks as the KV cache grows.''

\begin{proof}
Differentiate $f(n)$ with respect to $n$:
\[
f'(n)
=
-\frac{L\exp(\Delta_i^{(\ell)}/\sqrt{d_{\text{head}}})}{\left(L+n\exp(\Delta_i^{(\ell)}/\sqrt{d_{\text{head}}})\right)^2}
<
0.
\]
Thus $f(n)$ is strictly decreasing, and its denominator diverges linearly in $n$ while its numerator is fixed, so $f(n)\to0$.
\end{proof}

\begin{corollary}[Vanishing random contribution under annealing]\label{cor:vanishing_perturbation}
If $n\to\infty$ while $L$ and $\Delta_i^{(\ell)}$ remain fixed, $w_{r,i}^{(\ell)}\to 0$, and for every realization of the random values
\[
\|\boldsymbol{\eta}_i^{(\ell)}\|
\le
w_{r,i}^{(\ell)} \max_{j \in [L]} \|\mathbf{v}_{r_j}^{(\ell)}\|
\to
0.
\]
\end{corollary}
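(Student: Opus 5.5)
The plan is to chain Theorem~\ref{thm:decay} (equivalently Corollary~\ref{cor:sequence_annealing}) with Corollary~\ref{cor:perturbation_magnitude}. I take the statement's hypothesis literally: $L$ and the gap $\Delta_i^{(\ell)}$ are held fixed as $n\to\infty$, which is exactly the regime where the theorem's envelope applies.

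\textbf{Step 1: attention mass.} At each $n$, Theorem~\ref{thm:decay} gives the pointwise inequality
\[
0\le w_{r,i}^{(\ell)}\le f(n)=\frac{L}{L+n\exp(\Delta_i^{(\ell)}/\sqrt{d_{\text{head}}})}.
\]
The lower bound holds because $w_{r,i}^{(\ell)}$ is a sum of softmax weights. Corollary~\ref{cor:sequence_annealing} gives $f(n)\to0$, so the squeeze theorem yields $w_{r,i}^{(\ell)}\to0$.

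\textbf{Step 2: RSP contribution.} For every realization, Corollary~\ref{cor:perturbation_magnitude} gives
\[
\|\boldsymbol{\eta}_i^{(\ell)}\|\le w_{r,i}^{(\ell)}M, \qquad M:=\max_{j\in[L]}\|\mathbf{v}_{r_j}^{(\ell)}\|.
\]
So the second claim follows once the product $w_{r,i}^{(\ell)}M$ tends to zero.

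\textbf{Main obstacle: boundedness of $M$.} The product vanishes only if $M$ stays bounded along the sequence. The RSP value vectors belong to the $L$ fixed RSP positions. In the frozen-gap setting I treat them as fixed for each realization, since they are computed from the RSP inputs and earlier positions and do not depend on later tokens under causal masking. Then $M$ is a finite constant for every realization, because $L$ is finite, and $w_{r,i}^{(\ell)}M\le f(n)M\to0$.

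If one instead lets the values co-evolve with $n$ (e.g.\ at deeper layers), I would add the explicit hypothesis $\sup_n M<\infty$. The same squeeze argument then goes through, with $f(n)\sup_n M\to0$. I would state this caveat in the same way the paper already qualifies the fixed-gap reading of Theorem~\ref{thm:decay}.
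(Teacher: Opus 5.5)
Your proposal is correct and follows the paper's own proof. The paper likewise gets $w_{r,i}^{(\ell)}\to0$ from the vanishing fixed-gap envelope of Corollary~\ref{cor:sequence_annealing}, then applies the realization-wise bound of Corollary~\ref{cor:perturbation_magnitude}, treating $\max_{j\in[L]}\|\mathbf{v}_{r_j}^{(\ell)}\|$ as finite. Two of your steps make explicit what the paper's one-line remark (that this maximum norm is ``finite for any realization'') leaves implicit: the squeeze using $w_{r,i}^{(\ell)}\ge0$, and the causal-masking argument that the RSP value vectors do not change as $n$ grows. The latter matters, since finiteness alone would not suffice if that maximum could grow with $n$.
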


\begin{proof}
$w_{r,i}^{(\ell)}\to 0$ by Corollary~\ref{cor:sequence_annealing}. Apply Corollary~\ref{cor:perturbation_magnitude} to bound $\|\boldsymbol{\eta}_i^{(\ell)}\|$. The maximum norm is finite for any realization of finitely many sampled values.
\end{proof}

\subsection{Performance gain: from local admission to Pass@$N$}

From here we work at the vocab-logit and task levels, dropping the per-layer index $\ell$. The main text uses a first-order surrogate to state two claims: the \S\ref{sec:improve} claim that a baseline-excluded vocab token can enter the local top-$K$ set, and the \S\ref{sec:converge} closing that independent reruns turn such local openings into Pass@$N$ gains. The proofs below use only one support condition on the centered prompt law $D$:
\[
\textnormal{(S1)}\qquad D(U) > 0 \quad \text{for every nonempty open set } U \subseteq \mathbb{R}^d.
\]
Proposition~\ref{prop:gaussian_open_support} shows that the isotropic Gaussian law of §\ref{sec:multipath} satisfies (S1).

\subsubsection{Autoregressive Formulation}

In autoregressive decoding, the joint distribution over a sequence $y_{1:T}$ factorizes as
\[
\Pr(y_{1:T} \mid x) = \prod_{t=1}^T p(y_t \mid x, y_{<t}),
\]
where each step is governed by a prefix-conditioned distribution. Any perturbation introduced by RSP affects generation through a sequence of local changes to $p(y_t \mid x, y_{<t})$, rather than a single global distribution.

\subsubsection{Local Linearization of Logits under RSP}

Throughout this subsection, $\bar h\in\mathbb{R}^d$ denotes \emph{one} centered prompt vector from the RSP definition (Eq.~\eqref{eq:rsp_def} in §\ref{sec:rsp_definition}), treated as a per-token surrogate. The actual implementation uses a length-$L$ prompt $\mathbf{H}=(h_1,\dots,h_L)$ with $L$ independent draws; the arguments below isolate how one local perturbation can shift the logits. We model the perturbed logits as a function $z_a(\bar{h})$ at vocab token $a$, assuming local smoothness in a neighborhood of $\bar{h}=0$:
\[
z_a(\bar{h}) = z_a(0) + \nabla_{\bar h} z_a(0)^\top \bar{h} + r_a(\bar{h}),
\]
where $r_a(\bar{h}) = O(\|\bar{h}\|^2)$. Defining $c_a := \nabla_{\bar h} z_a(0)$ and $z_a := z_a(0)$, we obtain the local affine surrogate
\[
z_a^{\mathrm{lin}}(\bar{h}) := z_a + c_a^\top \bar{h}.
\]
This approximation is used as a first-order surrogate for branch opening, not as an exact global model of the transformer logits.

\subsubsection{Top-$K$ entry region (Remark)}\label{app:topk_entry}

This complements the informal branching argument in \S\ref{sec:improve}. Define the entry region
\[
\mathcal{G}_{a,K}
\;:=\;
\bigl\{\bar h \in \mathbb{R}^d : \#\{b\neq a : z_b^{\mathrm{lin}}(\bar h) > z_a^{\mathrm{lin}}(\bar h)\} \le K-1\bigr\}.
\]
If $\mathcal{G}_{a,K}$ contains a nonempty open set $U$, then by (S1) $D(U)>0$, and $U\subseteq\mathcal{G}_{a,K}$ gives $\Pr_{\bar h}(\bar h\in\mathcal{G}_{a,K})\ge D(U)>0$. A useful sufficient condition is the existence of some $\bar h_0$ at which token $a$ strictly outranks all but at most $K-1$ other tokens in the affine surrogate; by continuity, a neighborhood of $\bar h_0$ is then contained in $\mathcal{G}_{a,K}$. This is a rank-changing event unreachable by the monotone temperature transform, but it does not by itself imply (M1).

\subsubsection{Pass@$N$ ensemble bound (Remark)}\label{app:passN_ensemble}

The bound $\Pr(\mathrm{Pass@}N \text{ on } x) \ge 1-(1-p_{\min}(x))^N$ in the main text is a standard independent-Bernoulli union argument. Let $A_n$ be the event that the $n$-th run reaches a correct trajectory on $x$; under (M1) $\Pr(A_n)\ge p_{\min}(x)$, and under (M2)
\[
\Pr\!\Big(\bigcap_{n=1}^N A_n^c\Big)
\le (1-p_{\min}(x))^N,
\qquad
\Pr\!\Big(\bigcup_{n=1}^N A_n\Big)
\ge 1-(1-p_{\min}(x))^N
\;\ge\; 1-e^{-N p_{\min}(x)}.
\]
This is a generic fact, not specific to RSP; RSP's role is to make branch opening possible and to support (M2) through independent resampling, while whether a branch satisfies the task-side correctness condition (M1) remains task-dependent.

\section{Attention Mass Analysis}\label{app:attention_mass}

This appendix formalizes the per-token attention mass reported in Figure~\ref{fig:attention_mass} and the exclusion criteria applied to the reasoning span and the layer axis. For each of the 500 MATH-500 problems, a fresh RSP $\mathbf{H} \in \mathbb{R}^{L \times d}$ with $L = 10$ is sampled independently, so the reported heatmaps average over both problem variability and RSP-vector variability.

\subsection{Per-Token Attention Mass}

For each problem, we measure attention on the concatenated sequence $[\text{prompt}; \mathbf{H}; \text{generation}]$, where the generation is the trajectory produced by the same model under matching RSP injection. Let $\alpha^{(\ell, m)}_{i,j}$ denote the post-softmax attention weight at layer $\ell$, head index $m$ (using $m$ to avoid collision with the vocab token $a$ of \S\ref{sec:improve} and the RSP matrix $\mathbf{H}$), from query position $i$ to key position $j$, satisfying $\sum_j \alpha^{(\ell, m)}_{i,j} = 1$ under the causal mask. Let $Q$ and $R$ denote the index sets of question and RSP tokens with sizes $|Q|$ and $|R| = L$, and let $N_{\text{head}}$ be the number of heads. The per-token attention mass that query $i$ directs to each group at layer $\ell$ is
\begin{equation}
\mathrm{PTM}_{Q}[\ell, i] = \frac{1}{|Q|\,N_{\text{head}}} \sum_{m=1}^{N_{\text{head}}} \sum_{j \in Q} \alpha^{(\ell, m)}_{i,j},
\qquad
\mathrm{PTM}_{R}[\ell, i] = \frac{1}{|R|\,N_{\text{head}}} \sum_{m=1}^{N_{\text{head}}} \sum_{j \in R} \alpha^{(\ell, m)}_{i,j}.
\label{eq:per_token_mass}
\end{equation}
Normalization by $|Q|$ and $|R|$ controls for group size, so both quantities express how much attention a single question (resp.\ RSP) token receives on average under the same softmax denominator. Question tokens thereby serve as a size-matched baseline that absorbs sequence-length effects common to both groups.

\subsection{Heatmap Aggregation}

We partition the layer indices and the reasoning position indices each into five equal-size bins $\{B_L^{(b)}\}_{b=1}^{5}$ and $\{B_P^{(b)}\}_{b=1}^{5}$. For sample $s$ and target group $\bullet \in \{Q, R\}$, the value in cell $(b_L, b_P)$ is
\begin{equation}
M^{(s)}_{\bullet}[b_L, b_P] = \frac{1}{|B_L^{(b_L)}| \cdot |B_P^{(b_P)}|} \sum_{\ell \in B_L^{(b_L)}} \sum_{i \in B_P^{(b_P)}} \mathrm{PTM}_{\bullet}[\ell, i].
\label{eq:binned_mass}
\end{equation}
Figure~\ref{fig:attention_mass} reports the sample average of Eq.~\eqref{eq:binned_mass} over the 500 valid samples.

\subsection{Excluding the \texttt{\textbackslash boxed\{\ldots\}} Span}

The reasoning position range terminates strictly before the final \texttt{\textbackslash boxed\{\ldots\}} span. Chain-of-thought outputs decompose into a \textit{text} (content) component and a \textit{pattern} (template) component that contribute independently to downstream performance \citep{madaan2022text}. The \texttt{\textbackslash boxed\{\ldots\}} wrapper is a canonical pattern: a short, stereotyped span whose role is to complete the answer format rather than to extend reasoning. Including it would therefore inject a template-driven attention signature unrelated to reasoning dynamics.

\subsection{Excluding the Final Two Layers}

The layer axis further excludes the last two transformer layers. This choice is motivated by a standard late-layer effect rather than by any model-specific tuning decision.

\paragraph{Output-projection specialization.}
Late-layer hidden states lie in an approximately linear relationship with vocabulary logits and therefore function as a progressive linear readout rather than as representation-transforming blocks \citep{belrose2023eliciting}. Consistently, late-layer feed-forward modules have been characterized as output-distribution shaping mechanisms \citep{geva2021transformer}. Attention in these layers therefore reflects output formation rather than reasoning computation.

Excluding the final two layers keeps the heatmap focused on reasoning-phase dynamics instead of readout-phase dynamics. This exclusion is not load-bearing for the main claim: the sequence-direction attenuation emphasized in the main text is also visible without it, and Theorem~\ref{thm:decay} concerns sequence-direction attenuation rather than layer-wise monotonicity.

\subsection{Additional Suffix Heatmaps}

For the late-layer reasons discussed above, the pattern in Figure~\ref{fig:app_attention_additional} does not generalize uniformly across every cell; nevertheless, the overall trend is consistent across both models: question-token attention varies broadly across layers, whereas RSP attention shows sequence-wise decay consistent with Theorem~\ref{thm:decay} alongside an empirical layer-wise attenuation that the theorem itself does not predict.

\begin{figure}[H]
  \centering
  \begin{subfigure}[b]{\textwidth}
    \centering
    \includegraphics[width=\textwidth]{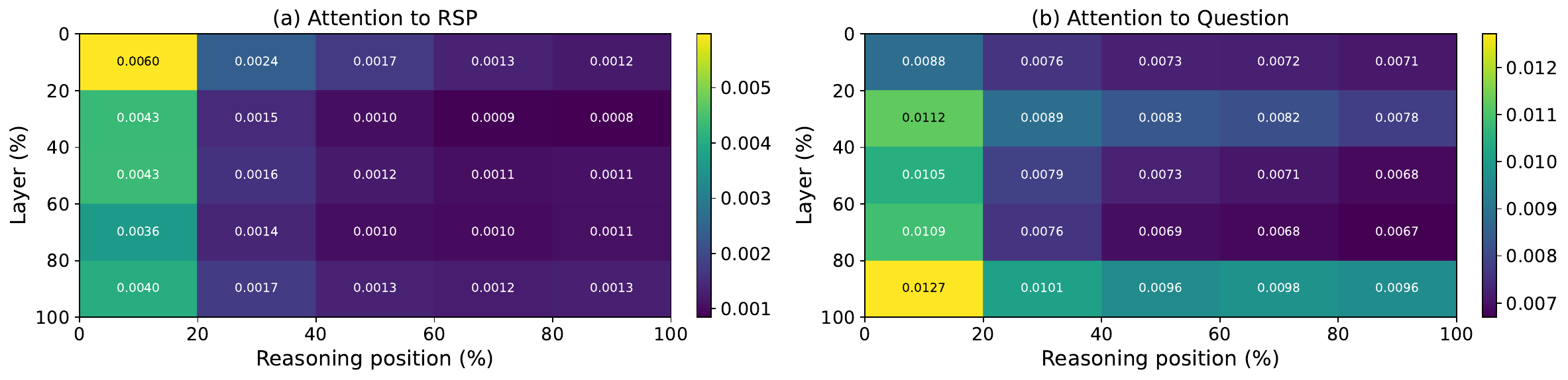}
    \caption{Qwen2.5-Math-1.5B-Instruct, \emph{suffix}}
  \end{subfigure}
  \\[0.5em]
  \begin{subfigure}[b]{\textwidth}
    \centering
    \includegraphics[width=\textwidth]{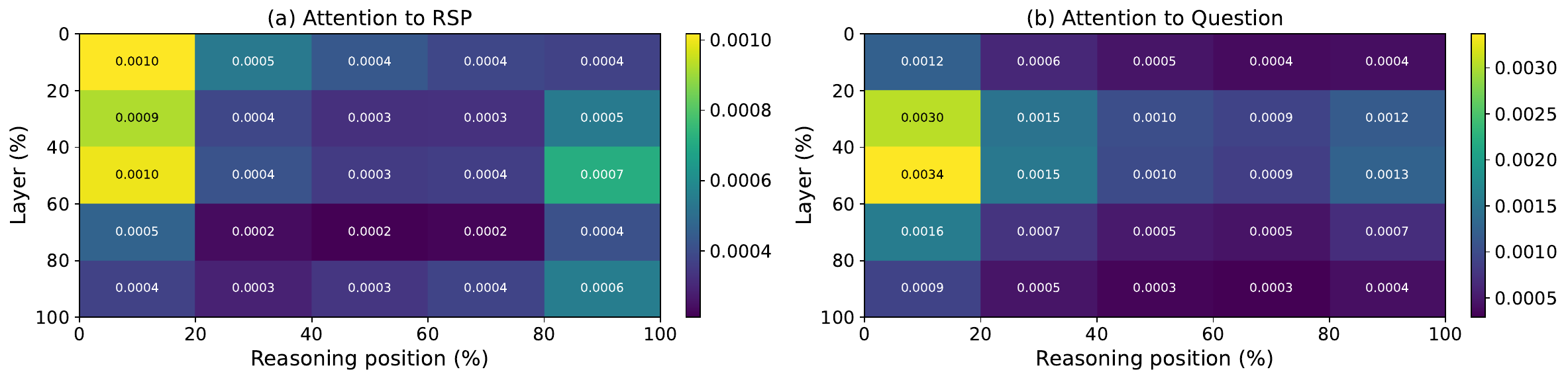}
    \caption{LLaMA-3.1-8B-Instruct, \emph{suffix}}
  \end{subfigure}
  \caption{Per-token RSP attention mass under \emph{suffix} injection for the remaining two models (500 MATH-500 samples each). Axes and preprocessing match Figure~\ref{fig:attention_mass}: x-axis is reasoning position binned into five quantiles, y-axis is layer depth binned into five quantiles (top = shallow), color encodes the 500-sample mean per-token attention assigned to RSP tokens, and tokens inside \texttt{\textbackslash boxed\{\}} spans and the last two layers are excluded.}
  \label{fig:app_attention_additional}
\end{figure}

\section{DAPO Implementation}\label{app:dapo}

This appendix complements Section~\ref{sec:application} with the DAPO loss modifications relative to GRPO, the DAPO~+~RSP implementation, and full per-step results. The training code is available at \url{https://github.com/heejunkim00/RSP}; pretrained DAPO and DAPO\,+\,RSP checkpoints across training steps 10--100 will be released at the camera-ready stage.

\subsection{From GRPO to DAPO}\label{app:grpo}

The vanilla GRPO objective \citep{shao2024deepseekmath} for a group of $G$ rollouts $\{y_i\}_{i=1}^G$ (each $y_i = (y_{i,1},\dots,y_{i,|y_i|})$ a token sequence; we use $y$ rather than $o$ throughout this appendix to avoid collision with the head output $\mathbf{o}_i^{(\ell)}$ of \S\ref{sec:improve}) with rewards $\{R_i\}_{i=1}^G$ is
\begin{equation}\label{eq:grpo}
\mathcal{J}_{\text{GRPO}}(\theta) = \mathbb{E}\!\left[\frac{1}{G}\sum_{i=1}^{G}\frac{1}{|y_i|}\sum_{t=1}^{|y_i|}\!\left(\min\!\big(r_{i,t}\hat{A}_{i,t},\,\text{clip}(r_{i,t},1{-}\varepsilon,1{+}\varepsilon)\hat{A}_{i,t}\big) - \beta\,D_{\text{KL}}(\pi_\theta\|\pi_{\text{ref}})\right)\right],
\end{equation}
with importance ratio $r_{i,t} = \pi_\theta(y_{i,t}\mid q,y_{i,<t})/\pi_{\theta_{\text{old}}}(y_{i,t}\mid q,y_{i,<t})$, group-relative advantage $\hat{A}_{i,t} = (R_i - \mathrm{mean}(\{R_i\}))/\mathrm{std}(\{R_i\})$, clipping range $\varepsilon$, and KL penalty $\beta\,D_{\text{KL}}$ against the reference $\pi_{\text{ref}}$.

We build on the SimpleRL-Zoo \citep{zeng2025simplerl} training pipeline and implement DAPO on top of VeRL \citep{sheng2024hybridflow}. DAPO modifies Eq.~\eqref{eq:grpo} in four ways: (i) token-level loss aggregation $1/\sum_i|y_i|$ instead of $1/|y_i|$, (ii) asymmetric clipping with $(\varepsilon_{\text{low}},\varepsilon_{\text{high}})=(0.2,0.28)$, (iii) an implementation-side negative-advantage dual-clipping safeguard $\max(L_{\text{clipped}},c\hat{A})$ with $c=10$ (active only when $\hat{A}<0$; omitted from Eq.~\eqref{eq:dapo} below for readability), and (iv) KL terms removed. The resulting DAPO objective is
\begin{equation}\label{eq:dapo}
\mathcal{J}_{\text{DAPO}}(\theta)
\;=\;
\mathbb{E}\!\left[
\frac{1}{\sum_i|y_i|}
\sum_{i,t}
\min\!\Big(r_{i,t}\hat{A}_{i,t},\,
\text{clip}\big(r_{i,t},\,1{-}\varepsilon_{\text{low}},\,1{+}\varepsilon_{\text{high}}\big)\hat{A}_{i,t}\Big)
\right].
\end{equation}

\subsection{DAPO + RSP Implementation}\label{app:dapo_rsp_impl}

For DAPO + RSP, we additionally inject a per-rollout RSP $\mathbf{H}_i \in \mathbb{R}^{L \times d}$ ($L=20$) into each rollout. Each $\mathbf{H}_i$ is generated deterministically from a per-step rollout seed and injected via a forward hook; it is not a trainable parameter, so the learning signal is the policy adapting to arbitrary $\mathbf{H}_i$ rather than a learned prompt. Including $\mathbf{H}_i$ in the log-probabilities at both rollout and update time keeps the update on-policy with respect to the RSP-conditioned distribution (avoiding off-policy mismatch). With $G=8$ rollouts per prompt, each rollout receives an independently drawn $\mathbf{H}_i$.

\subsection{Evaluation Protocol}

Each saved checkpoint is converted to HuggingFace format and evaluated using the SimpleRL-Zoo \texttt{eval\_math\_nodes.sh} pipeline, which selects sampling parameters per benchmark to reduce variance on the smaller competition sets:

\begin{table}[H]
\centering
\caption{Per-benchmark evaluation protocol used by SimpleRL-Zoo \texttt{sh/eval.sh}.}
\label{tab:dapo_eval_protocol}
\footnotesize
\setlength{\tabcolsep}{6pt}
\renewcommand{\arraystretch}{0.9}
\begin{tabular}{lcccl}
\toprule
\textbf{Benchmark} & \textbf{\# Problems} & \textbf{Temperature} & \textbf{$n_{\text{sampling}}$} & \textbf{Metric} \\
\midrule
AIME 2024       & 30    & 1.0 & 32 & Avg@32 \\
AMC 2023        & 40    & 1.0 & 32 & Avg@32 \\
MATH-500        & 500   & 0.0 & 1  & accuracy (mean@1) \\
GSM8K           & 1{,}319 & 0.0 & 1  & accuracy \\
OlympiadBench   & 675   & 0.0 & 1  & accuracy \\
Minerva Math    & 272   & 0.0 & 1  & accuracy \\
GaoKao 2023-EN  & 385   & 0.0 & 1  & accuracy \\
\bottomrule
\end{tabular}
\end{table}

The maximum number of generated tokens is $16{,}000$ across all benchmarks. The reported average is the unweighted mean over the five benchmarks (GSM8K, MATH-500, College Math, Minerva Math, AIME 2024).

\subsection{Data, Batch, and Hyperparameters}

\begin{table}[H]
\centering
\caption{DAPO~/~DAPO~+~RSP training setup on Qwen2.5-Math-7B.}
\label{tab:dapo_hp}
\footnotesize
\setlength{\tabcolsep}{4pt}
\renewcommand{\arraystretch}{0.9}
\begin{tabular}{ll}
\toprule
\textbf{Field} & \textbf{Value} \\
\midrule
Train data & MATH Level 3--5 subset (\texttt{simplelr\_math\_35}, \textasciitilde 8{,}500 prompts) \\
Prompt template & \texttt{qwen-boxed} \\
Train batch size & 1024 \\
PPO mini-batch size & 256 (4 PPO updates per rollout) \\
Rollouts per prompt $G$ & 8 \\
Max prompt / response length & 2{,}028 / 2{,}048 \\
Rollout sampling & temperature $1.0$, top-$p$ $1.0$, top-$k$ $50$ \\
\midrule
$(\varepsilon_{\text{low}},\varepsilon_{\text{high}})$ & $(0.2,\,0.28)$ \\
Dual clip $c$ & $10.0$ \\
Loss aggregation & token-mean ($1/\sum_i|y_i|$) \\
KL terms & disabled \\
Entropy coefficient & $0$ \\
\midrule
Optimizer & AdamW, learning rate $5\times10^{-7}$ (constant), no warmup \\
Total training & \textasciitilde 10 epochs, $\geq 80$ optimization steps \\
Save / eval frequency & every 10 steps \\
\midrule
RSP (DAPO + RSP only) & \emph{suffix} injection, $L=20$, freshly drawn per rollout \\
\midrule
Hardware & $4 \times$ NVIDIA B200 GPUs \\
Wall-clock training time & $\sim$$12$ hours per run \\
\bottomrule
\end{tabular}
\end{table}

\subsection{Per-Step Average Accuracy}

\begin{table}[H]
\centering
\caption{Five-benchmark average accuracy (\%) at every checkpoint, computed over GSM8K, MATH-500, College Math, Minerva Math, and AIME 2024. \textbf{Bold} marks each method's peak.}
\label{tab:dapo_per_step}
\footnotesize
\setlength{\tabcolsep}{6pt}
\renewcommand{\arraystretch}{0.9}
\begin{tabular}{c c c}
\toprule
\textbf{Step} & \textbf{DAPO} & \textbf{DAPO~+~RSP} \\
\midrule
0   & 30.06 & 30.06 \\
10  & 46.40 & 46.22 \\
20  & 49.02 & 49.54 \\
30  & 49.58 & 50.72 \\
40  & 50.62 & 52.06 \\
50  & 50.88 & 52.40 \\
60  & 51.84 & 52.96 \\
70  & 52.52 & 53.86 \\
80  & \textbf{53.20} & 54.04 \\
90  & 52.52 & \textbf{54.36} \\
100 & 50.54 & 53.64 \\
\bottomrule
\end{tabular}
\end{table}

\section{Broader Impacts}\label{app:broader_impacts}

This work provides empirical and theoretical analysis of how random embedding injection affects the reasoning behavior of large language models. The contributions are primarily foundational, with potential downstream implications for reinforcement learning-based post-training of reasoning models such as GRPO.

\paragraph{Positive impacts.}
RSP-induced trajectory diversity could improve the sample efficiency of RL-based LLM post-training by providing richer learning signals within group-relative advantage estimation. This has the potential to reduce compute costs associated with alignment and reasoning-oriented fine-tuning, making such methods more accessible to research groups with limited resources.

\paragraph{Potential concerns.}
Methods that expand the effective output support of LLMs may increase the reachability of low-probability tokens, which includes both desirable alternative reasoning paths and potentially unsafe or off-distribution outputs. Practitioners applying RSP in production systems should combine it with existing safety filtering and alignment mechanisms rather than treating it as a standalone diversity source.

\paragraph{Limitations on impact scope.}
Since this work focuses on analysis rather than deploying a new model or dataset, the direct societal impact is limited. The broader implications described above are contingent on future work integrating RSP into applied training pipelines.